\DeclareMathOperator*{\argmax}{arg\,max}
\newcommand{\myreferences}{Draf.bbl}
\newtheorem{cor}{Corollary}
\newtheorem{lem}{Lemma}
\newtheorem{thm}{Theorem}
\newcommand\numberthis{\addtocounter{equation}{1}\tag{\theequation}}
\title{Data Clustering and Graph Partitioning via Simulated Mixing}
\author{Shahzad Bhatti\thanks{ISE \& CSL, University of Illinois at Urbana Champaign, IL Email:\{bhatti2,~ beck3,~ angelia\}@illinois.edu} \\
\and 
Carolyn Beck\footnotemark[1]
\and
Angelia Nedi\'{c}\footnotemark[1]}
\date{}
\begin{document}
\maketitle

\begin{abstract}
Spectral clustering approaches have led to well-accepted algorithms for finding accurate clusters in a given dataset. However, their application to large-scale datasets has been hindered by  computational complexity of eigenvalue decompositions. Several algorithms have been proposed in the recent past to accelerate spectral clustering, however they compromise on the accuracy of the spectral clustering to achieve faster speed. In this paper, we propose a novel spectral clustering algorithm based on a mixing process on a graph. Unlike the existing spectral clustering algorithms, our algorithm does not require computing eigenvectors. Specifically, it finds the equivalent of a linear combination of eigenvectors of the normalized similarity matrix weighted with corresponding eigenvalues. This linear combination is then used to partition the dataset into meaningful clusters. Simulations on real datasets show that partitioning datasets based on such linear combinations of eigenvectors achieves better accuracy than standard spectral clustering methods as the number of clusters increase. Our algorithm can easily be implemented in a distributed setting.
\end{abstract}

\section{Introduction}

Data clustering is a fundamental problem in pattern recognition, data mining, computer vision, machine learning, bioinformatics and several other related disciplines. It has a long history and researchers in various fields have proposed numerous solutions. Several spectral clustering algorithms have been proposed \cite{Shi2000,Ding2001,Ng2002,Bach2004}, which have enjoyed great success and have been widely used to cluster data. However, spectral clustering does not scale well to large-scale problems due to its considerable computational cost. In general, spectral clustering algorithms seek a low-dimensional embedding of the dataset by computing the eigenvectors of a Laplacian or similarity matrix. For a dataset with $n$ instances, eigenvector computation has  time complexity of $O(n^3)$ \cite{Arbenz2012}, which is significant for large-scale problems.

In the past few years, efforts have been focused towards addressing scalability of spectral clustering. A natural way to achieve scalability is to perform spectral clustering on a sample of the given dataset and then generalize the result to the rest of data. For example, Fowlkes et al. \cite{Fowlkes2004} find an approximate solution by first performing spectral clustering on a small random sample from the dataset and then using the Nystrom method; they extrapolate the solution to all the dataset. In \cite{Sakai2009}, Sakai and Imiya also find an approximate spectral clustering by clustering a random sample of the dataset. They also reduce the dimension of the dataset using random projections. Another approach proposed by Yan et al. \cite{Yan2009} works by first determining a smaller set of representative points using $k$-means (each centroid is a representative point) and then performing spectral clustering on the representative points. Finally, the original dataset is clustered by assigning each point to the cluster of its representative. In \cite{Wen-yen2011}, Chen et al. deal with large-scale data by parallelizing both computation and memory use on distributed computers. 

These methods sacrifice the accuracy of spectral clustering to achieve fast implementation. In this paper, we perform spectral clustering without explicitly calculating eigenvectors. Rather we compute a linear combination of the right-eigenvectors weighted with corresponding eigenvalues. Moreover, unlike many traditional algorithms, our algorithm does not require a predefined number of clusters, $k$, as input. This algorithm can automatically detect and adapt to any number of clusters, based on a preselected tolerance. We apply our algorithm to large size stochastic block models to illustrate the scalability and demonstrate that it can handle large datasets where traditional spectral algorithms result in memory errors. We compare the accuracy and speed of our algorithm to the normalized cut algorithm \cite{Shi2000} on real datasets and show that our approach achieves similar accuracy but at a faster speed. We also show that our algorithm is faster and more accurate than both the Nystrom method for spectral clustering \cite{Fowlkes2004} and the fast approximate spectral clustering \cite{Yan2009}.

\textbf{Notation.} Throughout this paper we use boldface to distinguish between vectors and scalars. For example, $\mathbf{v}_i$ is in boldface to identify a vector, while $x_i$, a scalar, is not boldface. We use $\mathbf{1}$ to denote the vector of ones. We denote matrices by capital letters, such as $A$, and use $a_{ij}$ to represent the entries of $A$. Calligraphic font is used to denote sets with the single exception that $\mathcal{G}$ is reserved to denote graphs. The norm $\|\cdot\|$ denotes $\|\cdot\|_2$ for vectors and for matrices it denotes spectral norm.  


\section{Problem Statement}

Consider a set of $n$ data points $\mathcal{V} = \{\mathbf{v}_1, ~\mathbf{v}_2, ~\ldots, ~\mathbf{v}_n\}$ in a $d$-dimensional space. We will often use a short-hand notation $i$ to denote the vector $\mathbf{v}_i$. The goal is to find clusters in the dataset such that points in the same cluster are similar to each other, while points in differing clusters are dissimilar under some predefined notion of similarity. In particular, suppose pairwise similarity between points is given by some similarity function $s(\mathbf{v}_i, \mathbf{v}_j)$ often abbreviated by $s(i,j)$, where usually it is assumed that the function $s$ is symmetric. Also, the similarity function $s$ is non-negative if $\mathbf{v}_i \neq \mathbf{v}_j$ and is equal to zero if $\mathbf{v}_i = \mathbf{v}_j$. A similarity matrix is an $n \times n$ symmetric matrix $W$ such that the entry $w_{ij}$ is equal to the value of the similarity function $s(\mathbf{v}_i, \mathbf{v}_j)$ between points $\mathbf{v}_i$ and $\mathbf{v}_j$.

The data points together with the similarity function form a weighted undirected similarity graph $\mathcal{G} = (\mathcal{V},\mathcal{E})$; $\mathcal{V}$ is the set of nodes (vertices) of the graph, $\mathcal{E}$ is the set of edges.  We note that the problem of finding clusters in a dataset can be converted into a graph partitioning problem. We will make this more precise in the sequel. In particular, in our case each data point $\mathbf{v}_i$ represents a vertex/node in the graph. Two vertices $i$ and $j$ are connected by an edge if their similarity $s(i,j)$ is positive and the edge weight is given by $s(i,j)$. Different similarity measures lead to different similarity graphs. The objective of constructing a similarity graph is to model the local neighborhood relationships  to capture the geometric structure of the dataset using the similarity function. Some commonly used similarity graphs for spectral clustering are noted below.
\begin{enumerate}[a)]
\item \textbf{Gaussian similarity graphs:} Gaussian similarity graphs are based on the distance between points. Typically every pair of vertices is connected by an edge and the edge weight is determined by the Gaussian function (radial basis function) $s(i,j) = exp(- \|\mathbf{v}_i - \mathbf{v}_j\|^2 / 2\sigma^2 )$, where parameter $\sigma$ controls how quickly the similarity fades away as the distance between the points increases. It is often helpful to remove the edges for edge weights below a certain threshold (say $\delta$) to sparsify the graph. Thus the similarity function for this type of graph is 
\[
s(i,j) = \left \{ \begin{array} {l l}
e^{- \frac{\|\mathbf{v}_i - \mathbf{v}_j\|^2}{ 2\sigma^2}}  & \text{if } \|\mathbf{v}_i - \mathbf{v}_j\| > \delta, \\
0 & \text{otherwise}.
\end{array}\right.
\]
\item \textbf{$p$-nearest neighbor graphs:} As the name suggests each vertex $\mathbf{v}_i$ is connected to its $p$ nearest neighbors, where the nearness between $i$ and $j$ is measured by the distance $\|\mathbf{v}_i - \mathbf{v}_j\|$. This similarity measure results in a graph which is not necessarily symmetric in its similarity function, since the nearness relationship is not symmetric. In particular, if $\mathbf{v}_j$ is among the $p$ nearest neighbors of $\mathbf{v}_i$ then it is not necessary for $\mathbf{v}_i$ to be among the $p$ nearest neighbors of $\mathbf{v}_j$. We make the similarity measure symmetric by placing an edge between two vertices $\mathbf{v}_i$ and $\mathbf{v}_j$ if either $\mathbf{v}_i$ is among the $p$ nearest neighbors of $\mathbf{v}_j$ or $\mathbf{v}_j$ is among the $p$ nearest neighbors of $\mathbf{v}_i$, that is 
\[
s(i,j) = \left \{ \begin{array} {l l}
1  & \text{if either $i$ or $j$ is one of the $p$ nearest neighbors of the other}, \\
0 & \text{otherwise}.
\end{array}\right.
\]
\item \textbf{$\epsilon$-neighborhood graphs:} In an $\epsilon$-neighborhood graph, we connect two vertices $\mathbf{v}_i$ and $\mathbf{v}_j$ if the distance $\| \mathbf{v}_i - \mathbf{v}_j\|$ is less than $\epsilon$ giving us the following similarity function.
\[
s(i,j) = \left \{ \begin{array} {l l}
1  & \text{if } \|\mathbf{v}_i - \mathbf{v}_j \| \le \epsilon, \\
0 & \text{otherwise}.
\end{array}\right.
\]
\end{enumerate}
Thus finding clusters in a dataset is equivalent to finding partitions in the similarity graph such that the sum of edge weights between partitions is small and the partitions themselves are dense subgraphs. The degree of each vertex $i$ of the graph is given by $d_{i} = \sum_{j=1}^n w_{ij}$. A degree matrix $D$ is a diagonal matrix with its diagonal elements given by $d_i$ for $ i = 1, \ldots, n$. We assume that the degree of each vertex is positive. This in turn allows us to define the normalized similarity matrix by $\overline{W} = D^{-1}W$. \par

 \section{Mixing processes}
 
As a visualization of our clustering approach, consider a mixing process in which one imagines that every vertex in the graph moves towards (mixes with) other vertices in discrete time steps. At each time step, vertex $i$ moves towards (mixes with) vertex $j$ by a distance  proportional to the similarity $s(i,j)$. Thus the larger the similarity $s(i, j)$, the larger the distance vertices $i$ and $j$ move towards each other i.e., the greater the mixing. Moreover, a point $\mathbf{v}_i$ will move away from the points which have weak similarity with it. Thus, similar points will move towards each other making dense clusters and dissimilar points will move away from each other increasing the separability between clusters. Clusters in this transformed distribution of points then can easily be identified by the $k$-means algorithm. \par

To describe the above idea more precisely, consider the following model, where each point $\mathbf{v}_i$ moves according to the following equation, starting at its original position at time $t = 0$:
\begin{align}
\mathbf{v}_i^{t+1} &= \mathbf{v}_i^{t} + \alpha \sum_{j=1}^n \overline{w}_{ij} (\mathbf{v}_j^t - \mathbf{v}_i^t) \nonumber \\
&=  (1 - \alpha)\mathbf{v}_i^t + \alpha \sum_{j=1}^n \overline{w}_{ij} \, \mathbf{v}_j^t.   \label{Eq: TranSim}
\end{align}

The parameter $\alpha \in [0, 1]$ is the step size, which controls the speed of movement (or mixing rate) in each time interval. Observe that if the underlying graph has a bipartite component and $\alpha = 1$, then in each time step all points on one side of this component would move to the other side and vice versa. Therefore, points in this component would not actually mix even after a large number of iterations (for details see \cite{Chung1997}). For such graphs we must have $\alpha$ bounded away from 1. We can use $\alpha = 1$ for graphs without a bipartite component. Assuming each point $\mathbf{v}_i$ is a row vector, we express equation \eqref{Eq: TranSim} in a matrix form:
\begin{align}
V^{t+1} &= ((1 - \alpha) I + \alpha \overline{W}) V^t  \nonumber\\
&= M V^t.    \label{Eq: TranMat}
\end{align}
The matrix $V^{t}$ is a $n \times d$ matrix with row $i$ representing the position of point $\mathbf{v}_i$ at time $t$. $I$ is a $n \times n$ identity matrix, and we define $M = (1 - \alpha) I + \alpha \overline{W}$. Note that the matrix $M$ is essentially the transition matrix of a lazy random walk with probability of staying in place given by $1 - \alpha$. Since $M$ also captures  the similarity of the data points, one would expect that for $t$ large enough, the process in equation \eqref{Eq: TranMat} would reveal the data clusters, since $M$ will mix the data points according to their similarities. Using this intuition, one can expect that a heuristic algorithm based on equation \eqref{Eq: TranMat} can be constructed to determine the clusters, as given in the following algorithm.
 
\begin{algorithm}
\algrenewcommand\algorithmicrepeat{\textbf{Loop over}}
\caption{Point-Based Resource Diffusion (PRD) }\label{Alg: points}
\begin{algorithmic}[1]
\State{\textbf{Input:} Set of data points $\mathcal{V}$, number of clusters $k$}
\State{Represent the data points in the matrix $V$ with point $\mathbf{v}_i$ being the $i$th row.}
\State {Compute $M = (1- \alpha) I + \overline{W}$.}
\Repeat $~t$
\State $V^{t+1} \gets M V^t $
\Until{ Stopping criteria is met.}
\State{Find $k$ clusters from rows of $V^{t+1}$ using $k$-means algorithm.}
\State{\textbf{Output:} Clustering obtained in the final iteration.}
\end{algorithmic}
\end{algorithm}
Algorithm \ref{Alg: points} has two limitations: a) it does not scale well with the dimension $d$ of the data points, because the number of computations in each iteration is  $O(n^2d)$,  and b) it fails to identify clusters contained within other clusters. For example, in the case of two concentric circular clusters, points in both clusters will move towards the center and become one cluster, losing the geometric structure inherent in the data. Hence it becomes impossible to discern these clusters using the $k$-means algorithm in Step 7. To overcome these limitations, we associate an \emph{agent} $x_i$ to each point $\mathbf{v}_i$ and carry out calculations in the \emph{agent space}. Agents are generated by choosing $n$ points uniformly at random from a bounded interval $[0,b]$\footnote{Note, $b$ is a scaling parameter and does not change the resulting clustering. For the sake of simplicity we use a probability vector for the analysis in the subsequent section. }. We rewrite equation \eqref{Eq: TranMat} in the agent space as:
\begin{align}
\mathbf{x}^{t+1} &= ((1 - \alpha) I + \alpha \overline{W}) \mathbf{x}^t  = M \mathbf{x}^t.    \label{Eq: Lazy}
\end{align}
We refer to this iterative equation as the Mixing Process. In the following section we analyze this process using the properties of the random walk matrix $M$. 

\section{Analysis of the Mixing Process}
The matrix $M$ captures the similarity structure of the data, and the idea behind using the iterative process \eqref{Eq: Lazy} is that, after some sufficient number of iterations, the entries of the vector $\mathbf{x}^{t+1}$ will reveal clusters on a real line, which will be representative of the clusters in the data. The fact is that the process \eqref{Eq: TranMat} and \eqref{Eq: Lazy} both mix with the same speed, which is governed by the random walk $M$. Thus, the hope is that through the process in \eqref{Eq: Lazy} we determine the weakly coupled components in the matrix $M$, which can lead us to the data clusters of the points $\mathbf{v}_1, \ldots , \mathbf{v}_n$.

The Mixing Process shares a resemblance to the power iteration. Unlike the power iteration, we here use the mixing process to discover strongly coupled components of $M$, which translate to data clusters.

\subsection{Properties of the matrix $M$}
We first show that the matrix $M$ is diagonalizable, which allows for a more straightforward analysis. By definition,
\begin{align}
M &= (1 - \alpha) I + \alpha \overline{W}  \nonumber \\
& = I - \alpha(I - D^{-1}W) \nonumber \\
&= I - \alpha D^{-1/2} L D^{1/2} \nonumber\\
& = D^{-1/2} (I - \alpha L) D^{1/2}\label{Eq: nl},
\end{align}
where $L = I - D^{-1/2}W D^{-1/2}$ is the normalized Laplacian of the graph $\mathcal{G}$. Let $\bm{\phi}_i$ be a right eigenvector of $L$ with eigenvalue $\lambda_i$, then $D^{-1/2}\bm{\phi}_i$ is a right eigenvector of $M$ with eigenvalue $\mu_i = 1 - \alpha \lambda_i$, that is
\begin{align*}
MD^{-1/2}\bm{\phi}_i & = (I - \alpha D^{-1/2}L D^{1/2})D^{-1/2}\bm{\phi}_i \nonumber \\
& = (1 -\alpha \lambda_i)  D^{-1/2}\bm{\phi}_i .
\end{align*}
This gives us a useful relationship between the spectra of the random walk matrix $M$ and the normalized Laplacian $L$. It is well known that the eigenvalues of a normalized Laplacian lie in the interval $[0, 2]$, see for example \cite{Chung1997}. Thus, if $0 = \lambda_1 \le \lambda_2 \le \ldots \le \lambda_n \le 2$ are the eigenvalues of $L$, then the corresponding eigenvalues of $M$ are $1 = \mu_1 \ge \mu_2 \ge \ldots \ge \mu_n \ge 1 - 2\alpha.$ It is worth noting that we are considering the \emph{right} eigenvector of the random walk matrix $M$. One should not confuse this with the left eigenvector. \par

Although the matrix $M$ is not symmetric, $L$ is a symmetric positive semi-definite matrix, thus its normalized eigenvectors form an orthonormal basis for $\mathbb{R}^n$ and we can express $L$ in the following form:
\begin{equation}
L = \sum_{i=i}^n \lambda_i \bm{\phi}_i \bm{\phi}_i^T. \label{Eq: LapDec}
\end{equation}
Using \eqref{Eq: nl} and \eqref{Eq: LapDec}, we obtain
\begin{align}
M^t &= \left(D^{-1/2} (I - \alpha L) D^{1/2} \right)^t \nonumber\\
& =  D^{-1/2} \left(  \sum_{i=i}^n (1 - \alpha \lambda_i)^t \bm{\phi}_i \bm{\phi}_i^T \right) D^{1/2}. \label{Eq: Mt}
\end{align}
We will exploit this relationship in our proofs later.


\subsection{The ideal case}
For the sake of analysis, it is worthwhile to consider the ideal case, in which all points form tight clusters that are well-separated. By well-separated, we mean that if points $\mathbf{v}_i$ and $\mathbf{v}_j$ lie in different clusters, then their similarity $w_{ij} = 0$. Suppose that the data consists of $k$ clusters $\mathcal{V}_1,\, \mathcal{V}_2,\, \ldots, \, \mathcal{V}_k$ with $n_1,\, n_2, \, \ldots,\, n_k$ points, respectively, such that $\cup_{i=1}^{k} \mathcal{V}_i = \mathcal{V}$ and $n = \sum_{i=1}^k n_i$. For the ease of exposition, we also assume that the $\mathbf{v}_i$'s are numbered in such a way that points $\mathbf{v}_1,\mathbf{v}_2, \ldots, \mathbf{v}_{n_1}$ are in cluster $\mathcal{V}_1$, the points $\mathbf{v}_{n_1+1},\mathbf{v}_{n_1+2}, \ldots, \mathbf{v}_{n_1+ n_2}$ are in cluster $\mathcal{V}_2$ and so on. \par
 
The underlying graph in the ideal case consists of $k$ connected components $\mathcal{G}_1, \, \mathcal{G}_2, \, \ldots, \, \mathcal{G}_k$, where each component $\mathcal{G}_j$ consists of vertices in the corresponding cluster $\mathcal{V}_j$. We represent this ideal graph by $\mathcal{G}^*$, its normalized Laplacian by $L^*$ and its similarity matrix by $W^*$. The $n$-dimensional characteristic vector $\bm{\chi}_j$ of the $j$th component $\mathcal{G}_j$ is defined as 
\[
\bm{\chi}_j(i) = \left\{ 
\begin{array}{l @{\qquad} l}
1 & \text{if } \mathbf{v}_i \in \mathcal{G}_j, \\
0 & \text{otherwise}.
\end{array}
\right.
\]
The ideal similarity matrix $W^*$ and, consequently the ideal normalized Laplacian $L^*$ of a graph with $k$ connected components, are both block-diagonal with the $j^{th}$ block representing the component $\mathcal{G}_j$, i.e., 
\begin{align*}
W^* &= diag(W_1, \, W_2, \, \ldots, \, W_k),  \\
L^* &= diag(L_1, \, L_2, \, \ldots, \, L_k).
\end{align*}
Since $L^*$ is block-diagonal, its spectrum is the union of spectra of $L_1, \, L_2, \, \ldots,  \, L_k$. The eigenvalue $\lambda_1 = 0$ of $L^*$ has multiplicity $k$ with $k$ linearly independent normalized eigenvectors $\bm{\phi}_1^*, \, \bm{\phi}_2^*, \, \ldots, \, \bm{\phi}_k^*$.  Each of these eigenvectors is given by $\bm{\phi}_j^* = D^{1/2}\bm{\chi}_j/\| D^{1/2}\bm{\chi}_j\|$. In the following theorem, we prove that if we have an ideal graph then the iterate sequence $\{\mathbf{x}^t\}$ generated by the Mixing Process \eqref{Eq: Lazy} converges to a linear combination of characteristic vectors $\bm{\chi}_j$'s of the $k$ components of the graph. The $\bm{\chi}_j$'s are also eigenvectors of $M^*$, where $M^* = (1 - \alpha)I - D^{-1}W^*$, corresponding to first $k$ eigenvalues $\mu_1 = \mu_2 = \,... = \mu_k = 1$. 

\begin{thm}
Suppose that we have an ideal dataset which consists of $k$ clusters as defined previously and let $\mathbf{x}^0$ be any vector such that each $x_i^0>0$ and $(\mathbf{x}^0)^T \mathbf{1} = 1$, then 

$$ \| {M^*}^t \mathbf{x}^0 - \sum_{i = 1}^k c_i \bm{\chi}_i\| \le \max_{i > k} | 1 - \alpha \lambda_i |^t \frac{\max_j \sqrt{d_j}}{\min_j \sqrt{d_j}}, $$
 where $c_i =  \frac{\bm{\chi}_i^T D \mathbf{x}^0 }{ \mathbf{1}^TD \bm{\chi}_i}$and $d_j$ is the degree of the $j^{th}$ node.
\end{thm}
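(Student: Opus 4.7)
The plan is to start from the spectral decomposition of $M^t$ given in equation \eqref{Eq: Mt}, applied to the ideal Laplacian $L^*$, and then split the sum into the ``cluster'' subspace (eigenvalue $\mu=1$, multiplicity $k$) and its complement. Because $L^*$ is block diagonal with each block corresponding to a connected component $\mathcal{G}_j$, its zero eigenspace is spanned by the orthonormal vectors $\bm{\phi}_j^*=D^{1/2}\bm{\chi}_j/\|D^{1/2}\bm{\chi}_j\|$ noted in the text, and all other eigenvalues satisfy $\lambda_i>0$. Substituting into \eqref{Eq: Mt}, I would write
\begin{align*}
{M^*}^t\mathbf{x}^0
&= D^{-1/2}\Bigl(\sum_{i=1}^k \bm{\phi}_i^*(\bm{\phi}_i^*)^{T}\Bigr)D^{1/2}\mathbf{x}^0
+ D^{-1/2}\Bigl(\sum_{i>k}(1-\alpha\lambda_i)^t\bm{\phi}_i\bm{\phi}_i^{T}\Bigr)D^{1/2}\mathbf{x}^0.
\end{align*}

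Next I would identify the first term as $\sum_{i=1}^k c_i\bm{\chi}_i$. Plugging in $\bm{\phi}_i^*=D^{1/2}\bm{\chi}_i/\|D^{1/2}\bm{\chi}_i\|$ and simplifying $D^{-1/2}\bm{\phi}_i^*(\bm{\phi}_i^*)^{T}D^{1/2}$ gives $\bm{\chi}_i\bm{\chi}_i^{T}D/(\bm{\chi}_i^{T}D\bm{\chi}_i)$. Since $\bm{\chi}_i$ is a 0/1 indicator vector, $\bm{\chi}_i^{T}D\bm{\chi}_i = \sum_{j\in\mathcal{G}_i}d_j = \mathbf{1}^{T}D\bm{\chi}_i$, so the coefficient of $\bm{\chi}_i$ becomes exactly $c_i=\bm{\chi}_i^{T}D\mathbf{x}^0/(\mathbf{1}^{T}D\bm{\chi}_i)$ as claimed.

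It remains to bound the spectral norm of the residual term. I would use submultiplicativity: $\|D^{-1/2}\|=1/\min_j\sqrt{d_j}$, the middle operator is symmetric with eigenvalues $(1-\alpha\lambda_i)^t$ for $i>k$ so its spectral norm is $\max_{i>k}|1-\alpha\lambda_i|^t$, and finally $\|D^{1/2}\mathbf{x}^0\|\le\max_j\sqrt{d_j}\,\|\mathbf{x}^0\|$. The probability-vector hypothesis $x_i^0>0$ and $(\mathbf{x}^0)^{T}\mathbf{1}=1$ yields $\|\mathbf{x}^0\|^2=\sum_i(x_i^0)^2\le\sum_i x_i^0=1$, so $\|D^{1/2}\mathbf{x}^0\|\le\max_j\sqrt{d_j}$. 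Multiplying these three factors gives the stated bound.

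The only subtle point (hence the main obstacle) is keeping the algebra clean when pushing $D^{\pm 1/2}$ past the rank-one projectors $\bm{\phi}_i^*(\bm{\phi}_i^*)^{T}$ and recognizing the degree identity $\bm{\chi}_i^{T}D\bm{\chi}_i=\mathbf{1}^{T}D\bm{\chi}_i$, which is precisely what makes the coefficients collapse to the clean form $c_i$. Everything else is a straightforward application of the spectral identity already derived for $M^t$ together with elementary norm inequalities.
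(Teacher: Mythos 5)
Your proposal is correct and follows essentially the same route as the paper: apply the spectral identity \eqref{Eq: Mt} to $L^*$, split off the $k$ zero-eigenvalue terms, collapse them to $\sum_i c_i\bm{\chi}_i$ via $\bm{\phi}_i^*=D^{1/2}\bm{\chi}_i/\|D^{1/2}\bm{\chi}_i\|$ and the identity $\bm{\chi}_i^TD\bm{\chi}_i=\mathbf{1}^TD\bm{\chi}_i$, then bound the residual by submultiplicativity with $\|D^{\pm1/2}\|$ and $\|\mathbf{x}^0\|\le 1$. Your explicit justification of $\|\mathbf{x}^0\|\le 1$ from the probability-vector hypothesis is a small clarification the paper leaves implicit, but otherwise the arguments coincide.
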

\begin{proof}
We begin by noting that the iterative process \eqref{Eq: Lazy} is equivalent to $\mathbf{x}^{t} = M^t \mathbf{x}^0$. Using \eqref{Eq: Mt}, we have
\begin{align*}
\| {M^*}^t \mathbf{x}^0 - \sum_{i = 1}^k c_i \bm{\chi}_i\| 
& = \|D^{-1/2}\left(\sum_{i=1}^n (1 - \alpha \lambda_i)^t \bm{\phi}_i^* {\bm{\phi}_i^*}^T\right) D^{1/2}\mathbf{x}^0 -  \sum_{i = 1}^k c_i \bm{\chi}_i \|.    
\end{align*}
Separating the first $k$ terms in the sum and using the fact that eigenvector $\bm{\phi}_i^* = \frac{D^{1/2} \bm{\chi}_i}{\| D^{1/2} \bm{\chi}_i\|},$ for $i = 1, \ldots, k$, the above equation can be simplified as
\begin{align*}
\| {M^*}^t \mathbf{x}^0 - \sum_{i = 1}^k c_i \bm{\chi}_i\| 
& = \|D^{-1/2}\left(\sum_{i = 1}^k (1 - \alpha \lambda_i)^t \bm{\phi}_i^* {\bm{\phi}_i^*}^T\right) D^{1/2}\mathbf{x}^0 \\
&~~~+ D^{-1/2}\left(\sum_{i = k+1}^n (1 - \alpha \lambda_i)^t \bm{\phi}_i^* {\bm{\phi}_i^*}^T\right) D^{1/2}\mathbf{x}^0 -  \sum_{i = 1}^k c_i \bm{\chi}_i \|    \\
& = \|\sum_{i = 1}^k D^{-1/2} \bm{\phi}_i^* {\bm{\phi}_i^*}^T D^{1/2}\mathbf{x}^0 + D^{-1/2}\left(\sum_{i = k+1}^n (1 - \alpha \lambda_i)^t \bm{\phi}_i^* {\bm{\phi}_i^*}^T\right) D^{1/2}\mathbf{x}^0 -  \sum_{i = 1}^k c_i \bm{\chi}_i \|    \\
& = \|\sum_{i = 1}^k D^{-1/2} \frac{D^{1/2} \bm{\chi}_i}{\| D^{1/2} \bm{\chi}_i\|}\frac{(D^{1/2} \bm{\chi}_i)^T}{\| D^{1/2} \bm{\chi}_i\|} D^{1/2}\mathbf{x}^0 \\
&~~~+ D^{-1/2}\left(\sum_{i = k+1}^n (1 - \alpha \lambda_i)^t \bm{\phi}_i^* {\bm{\phi}_i^*}^T\right) D^{1/2}\mathbf{x}^0 -  \sum_{i = 1}^k c_i \bm{\chi}_i \|.  
\end{align*}
We can simplify the first term in the norm as
\begin{align*}
D^{-1/2} \frac{D^{1/2} \bm{\chi}_i}{\| D^{1/2} \bm{\chi}_i\|}\frac{(D^{1/2} \bm{\chi}_i)^T}{\| D^{1/2} \bm{\chi}_i\|} D^{1/2}\mathbf{x}^0 & = \frac{ \bm{\chi}_i ~ \bm{\chi}_i^T D^{1/2} D^{1/2} \mathbf{x}^0}{\| D^{1/2} \bm{\chi}_i\|^2} \\
&= \frac{ \bm{\chi}_i^T D \mathbf{x}^0}{\| D^{1/2} \bm{\chi}_i\|^2} \bm{\chi}_i,
\end{align*}
where the term $\|  D^{1/2} \bm{\chi}_i\|^2$ is equal to the sum of the degrees of vertices in component $\mathcal{G}_i$ (commonly called volume of $\mathcal{G}_i$) which can also be expressed by $ \mathbf{1}^TD \bm{\chi}_i$, to have
\begin{align*}
\| {M^*}^t \mathbf{x}^0 - \sum_{i = 1}^k c_i \bm{\chi}_i\|  & = \|\sum_{i = 1}^k \frac{\bm{\chi}_i^T D \mathbf{x}^0 }{ \mathbf{1}^TD \bm{\chi}_i} \bm{\chi}_i + D^{-1/2}\left(\sum_{i = k+1}^n (1 - \alpha \lambda_i)^t \bm{\phi}_i^* {\bm{\phi}_i^*}^T\right) D^{1/2}\mathbf{x}^0 -  \sum_{i = 1}^k c_i \bm{\chi}_i \|   \\
& = \| D^{-1/2}\left(\sum_{i = k+1}^n (1 - \alpha \lambda_i)^t \bm{\phi}_i^* {\bm{\phi}_i^*}^T\right) D^{1/2}\mathbf{x}^0 \|. 
\end{align*}
In the last equation we have used $c_i =  \frac{\bm{\chi}_i^T D \mathbf{x}^0 }{ \mathbf{1}^TD \bm{\chi}_i}$. Now using the properties of the norm, we can separate the terms, giving us
 \begin{align*}
\| {M^*}^t \mathbf{x}^0 - \sum_{i = 1}^k c_i \bm{\chi}_i\| & \le \| D^{-1/2}\| ~\|\sum_{i = k+1}^n (1 - \alpha \lambda_i)^t \bm{\phi}_i^* {\bm{\phi}_i^*}^T\|~\| D^{1/2}\|~\|\mathbf{x}^0 \| \\
&  \le \max_{i > k} | 1 - \alpha \lambda_i |^t \frac{\max_j \sqrt{d_j}}{\min_j \sqrt{d_j}}, ~~~~ \text{ since } \|\mathbf{x}^0 \| \le 1. 
\end{align*}
\end{proof}

Note that we can always choose $\alpha \in [0, 1]$ such that $\lambda_{k+1} = \argmax_{i > k} | 1 - \alpha \lambda_i |$. Thus the preceding inequality can be written as
\begin{align*}
\| {M^*}^t \mathbf{x}^0 - \sum_{i = 1}^k c_i \bm{\chi}_i\|& \le ( 1 - \alpha \lambda_{k+1})^t \frac{\max_j \sqrt{d_j}}{\min_j \sqrt{d_j}} \\
& \le e^{-\alpha t {\lambda}_{k+1}} \frac{\max_j \sqrt{d_j}}{\min_j \sqrt{d_j}}.
\end{align*}
For any $\xi >0,$ there exists some $t>0$ such that
\begin{align}
e^{-\alpha t {\lambda}_{k+1}} \frac{\max_j \sqrt{d_j}}{\min_j \sqrt{d_j}} &\le \xi. \label{Eq: epsGe}
\end{align}
Specifically, taking the log and simplifying, we have
\begin{align*}
 \frac{1}{\alpha {\lambda}_{k+1}}  \ln \left(\frac{\max_j \sqrt{d_j}}{\xi ~\min_j \sqrt{d_j}}\right) &\le t.
\end{align*}

\subsection{The general case}

In practice, the graph under consideration may not have $k$ connected components, but rather $k$ nearly connected components i.e., $k$ dense subgraphs sparsely connected by bridges (edges). We can obtain $k$ connected components from such a graph by removing only a small fraction of edges. This means that matrices $W$ and  $L$ have non-zero off-diagonal blocks, but both matrices have dominant blocks on the diagonal. The general case is thus a perturbed version of the ideal case. 

Let $W = W^* + E$ be the similarity matrix for a dataset $\mathcal{V}$, where $W^*$ is the similarity matrix corresponding to the true clusters (an ideal similarity  matrix), which is block-diagonal and symmetric. We obtain $W^*$ by replacing the off-diagonal block elements of $W$ with zeros and adding the sum of the off-diagonal block weights in each row to the diagonal elements. This results in the matrices $W$ and $W^*$ having the same degree matrix $D$. The matrix $E$ is then a symmetric matrix with row and column sums equal to zero with the $i^{th}$ diagonal entry given by $e_{ii}=-\sum_{j=1}^n e_{ij}$. The off-diagonal entries of $E$ are the same as the entries in the off-diagonal blocks of $W$. For example, for

\[
W = \left[ \begin{array}{c@{\quad} c @{\quad}c : c @{\quad}c @{\quad}c}
0 & 20 & 50 & 1 & 2 & 1 \\
20 & 0 & 30 & 0 & 1 & 1 \\
50 & 30  & 0 & 1 & 0 & 1 \\ \hdashline
1 & 0 & 1 & 0 & 25 & 40 \\
2 & 1 & 0 & 25 & 0 & 30 \\
1 & 1 & 1 & 40 & 30 & 0
\end{array}\right],
\]

the matrices $W^*$ and $E$  satisfying the above constraints are given by

\[
W^*= \left[ \begin{array}{c@{\quad} c @{\quad}c : c @{\quad}c @{\quad}c}
4 & 20 & 50 & 0 & 0 & 0 \\
20 & 2 & 30 & 0 & 0 & 0 \\
50 & 30  & 2 & 0 & 0 & 0 \\ \hdashline
0 & 0 & 0 & 2 & 25 & 40 \\
0 & 0 & 0 & 25 & 3 & 30 \\
0 & 0 & 0 & 40 & 30 & 3
\end{array}\right],
\text { and }
E = \left[ \begin{array}{r@{\quad} r @{\quad}r : r @{\quad}r @{\quad}r}
-4 & 0 & 0 & 1 & 2 & 1 \\
0 & -2 & 0 & 0 & 1 & 1 \\
 0 & 0 & -2 & 1 & 0 & 1 \\  \hdashline
1 & 0 & 1 & -2 & 0 & 0  \\
2 & 1 & 0 & 0 & -3 & 0 \\
1 & 1 & 1 & 0 & 0 & -3
\end{array}\right].
\]

Using the definition of $L$, one can then show that the following result holds.
\begin{lem}
If $W = W^* + E$ is a similarity matrix for a dataset $\mathcal{V}$, then the normalized Laplacian of the corresponding graph is $L = L^* - D^{-1/2}ED^{-1/2}$, where $L^*$ is the normalized Laplacian of the ideal graph corresponding to the true clusters.
\end{lem}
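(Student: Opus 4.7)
The plan is to just unfold the definition of the normalized Laplacian and exploit the carefully chosen construction of $W^*$ and $E$. Recall that $L=I-D^{-1/2}WD^{-1/2}$, so everything reduces to showing (i) the degree matrix $D$ of $W$ equals the degree matrix of $W^*$, and (ii) the linearity $D^{-1/2}(W^*+E)D^{-1/2}=D^{-1/2}W^*D^{-1/2}+D^{-1/2}ED^{-1/2}$ lets us split $L$ into a $W^*$-piece and an $E$-piece.

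First I would verify (i). By the stated construction, $W^*$ is obtained from $W$ by zeroing each off-diagonal block and then compensating on the diagonal by adding the sum of the removed entries in that row. Consequently, for every row $i$, the row sum of $W^*$ equals the row sum of $W$, i.e., $\sum_j w^*_{ij}=\sum_j w_{ij}=d_i$. Equivalently, $E=W-W^*$ has zero row (and, by symmetry, column) sums, as the example in the excerpt illustrates. Therefore the degree matrix of the graph defined by $W^*$ is exactly the same $D$ as that defined by $W$, and so the ideal normalized Laplacian $L^*=I-D^{-1/2}W^*D^{-1/2}$ uses the same scaling matrix that appears in $L$.

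Next I would carry out (ii): substitute $W=W^*+E$ into the definition of $L$ to get
\begin{align*}
L = I-D^{-1/2}(W^*+E)D^{-1/2} = \bigl(I-D^{-1/2}W^*D^{-1/2}\bigr)-D^{-1/2}ED^{-1/2} = L^*-D^{-1/2}ED^{-1/2},
\end{align*}
which is the claim.

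There is essentially no hard step here; the only thing that could go wrong is confusing oneself about whose degree matrix normalizes $L^*$. The entire content of the lemma is the observation that the perturbation $E$ was built to preserve row sums, so the same $D$ normalizes both Laplacians and the decomposition becomes additive. I would therefore spend most of the proof's words on justifying the shared-$D$ point and keep the algebraic manipulation to a single line.
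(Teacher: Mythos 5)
Your proposal is correct and follows exactly the route the paper intends: the paper's construction of $W^*$ is explicitly designed so that $W$ and $W^*$ share the same degree matrix $D$, after which the lemma is the one-line linear substitution $L = I - D^{-1/2}(W^*+E)D^{-1/2} = L^* - D^{-1/2}ED^{-1/2}$. Your emphasis on the shared-$D$ point as the only real content of the lemma matches the paper's reasoning.
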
	
Since eigenvalues and eigenvectors are continuous functions of entries of a matrix, the eigenvalues $\lambda_i$'s of $L$ can be written as
\[
\lambda_i  = \lambda_i^* + \tilde{\lambda}_i,
\]
where $\lambda_i^*$ is the eigenvalue of the ideal normalized Laplacian $L^*$, and $\tilde{\lambda}_i$ depend continuously on the entries of $\bar{E} \triangleq D^{-1/2}ED^{-1/2}$. Similarly the eigenvectors $\bm{\phi}_i$'s of $L$ can be expressed as
\[
\bm{\phi}_i = \bm{\phi}_i^* + \bm{\tilde{\phi}}_i,
\]
where $\bm{\phi}_i^*$ is the eigenvector of the ideal normalized Laplacian $L^*$ and $\bm{\tilde{\phi}}_i$ depend continuously on the entries of $\bar{E}$. Note that the pair $(\tilde{\lambda}_i, \bm{\tilde{\phi}}_i)$ is not necessarily an eigenvalue/eigenvector pair of $\bar{E}$. We assume that $\|E\|$ and consequently $\|\bar{E}\|$ are small enough so that $|\tilde{\lambda}_i|$ and $\| \bm{\tilde{\phi}}_i\|$ are also small.

\begin{thm} \label{Thm: gen}
Suppose that we have a dataset which consists of $k$ clusters and let $\mathbf{x}^0$ be any vector such that each $x_i^0>0$ and $(\mathbf{x}^0)^T \mathbf{1} = 1$, then we have 

\[
\| M^t \mathbf{x}^0 - \sum_{i=1}^k c_i \bm{\chi}_i \|   \le    \left( \sum_{i=1}^k  \left( 2\| \bm{\tilde{\phi}}_i \| +  \|\bm{\tilde{\phi}}_i \|^2  \right)  + \max_{\ell > k} |1 - \alpha \lambda_{\ell}|^t \right) \frac{\max_j\sqrt{d_j}}{\min_j \sqrt{d_j}},
\]
where $c_i =  (1 - \alpha \tilde{\lambda}_i)^{t} \frac{\bm{\chi}_i^T D \mathbf{x}^0 }{ \mathbf{1}^TD \bm{\chi}_i}$ and $d_j$ is the degree of the $j^{th}$ node.
\end{thm}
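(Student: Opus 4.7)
The plan is to follow exactly the scaffolding of the proof of the ideal-case theorem, using the spectral decomposition \eqref{Eq: Mt}, but now keeping track of perturbations in both the eigenvalues and the eigenvectors. Since $L$ is still symmetric (so its eigenvectors $\bm{\phi}_i$ still form an orthonormal basis) and the ideal eigenvalues satisfy $\lambda_i^*=0$ for $i\le k$, so that $\lambda_i=\tilde{\lambda}_i$ there, I would start from $\mathbf{x}^t = M^t\mathbf{x}^0$ and split the sum in \eqref{Eq: Mt} at index $k$:
\begin{align*}
M^t \mathbf{x}^0 &= D^{-1/2}\sum_{i=1}^k (1-\alpha\tilde{\lambda}_i)^t \bm{\phi}_i \bm{\phi}_i^T\, D^{1/2}\mathbf{x}^0 \\
&\quad + D^{-1/2}\sum_{i=k+1}^n (1-\alpha\lambda_i)^t \bm{\phi}_i \bm{\phi}_i^T\, D^{1/2}\mathbf{x}^0.
\end{align*}
I would then substitute $\bm{\phi}_i = \bm{\phi}_i^* + \bm{\tilde{\phi}}_i$ into the first sum and expand the outer product $\bm{\phi}_i\bm{\phi}_i^T$ into the four pieces $\bm{\phi}_i^*{\bm{\phi}_i^*}^T$, $\bm{\phi}_i^*\bm{\tilde{\phi}}_i^T$, $\bm{\tilde{\phi}}_i{\bm{\phi}_i^*}^T$, and $\bm{\tilde{\phi}}_i\bm{\tilde{\phi}}_i^T$.

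The leading piece $\bm{\phi}_i^*{\bm{\phi}_i^*}^T$ is exactly the one handled in Theorem~1; the same computation collapses $D^{-1/2}\bm{\phi}_i^*{\bm{\phi}_i^*}^T D^{1/2}\mathbf{x}^0$ to $\frac{\bm{\chi}_i^T D\mathbf{x}^0}{\mathbf{1}^T D\bm{\chi}_i}\bm{\chi}_i$, and multiplication by $(1-\alpha\tilde{\lambda}_i)^t$ produces precisely $c_i\bm{\chi}_i$. Hence, after subtracting $\sum_{i=1}^k c_i\bm{\chi}_i$, the residual has two parts: (i) the three sandwiched cross/quadratic terms for $i\le k$, each still scaled by $(1-\alpha\tilde{\lambda}_i)^t$; and (ii) the entire tail sum for $i>k$.

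Next I would take norms, apply the triangle inequality on both sums, and use submultiplicativity together with $\|D^{\pm 1/2}\|=\max_j d_j^{\pm 1/2}$, $\|\mathbf{x}^0\|\le\|\mathbf{x}^0\|_1=1$, and $\|\bm{\phi}_i^*\|=1$. Each cross term obeys $\|\bm{\phi}_i^*\bm{\tilde{\phi}}_i^T\|\le\|\bm{\tilde{\phi}}_i\|$ and $\|\bm{\tilde{\phi}}_i\bm{\tilde{\phi}}_i^T\|\le\|\bm{\tilde{\phi}}_i\|^2$, which together produce the factor $2\|\bm{\tilde{\phi}}_i\|+\|\bm{\tilde{\phi}}_i\|^2$. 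For the tail, orthonormality of $\{\bm{\phi}_i\}$ identifies $\sum_{i>k}(1-\alpha\lambda_i)^t \bm{\phi}_i\bm{\phi}_i^T$ as a symmetric matrix whose spectral norm equals $\max_{\ell>k}|1-\alpha\lambda_\ell|^t$, in direct parallel with the ideal case. Factoring out the common $\max_j\sqrt{d_j}/\min_j\sqrt{d_j}$ then assembles the asserted bound.

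The one non-routine point, and the main obstacle, is the implicit suppression of the factors $|1-\alpha\tilde{\lambda}_i|^t$ that sit in front of the $i\le k$ cross terms: the stated bound requires them to be at most $1$. This is exactly the small-perturbation regime announced in the paragraph preceding the theorem, where $|\tilde{\lambda}_i|$ is small enough that $1-\alpha\tilde{\lambda}_i\in[0,1]$ for $i\le k$. Once this bound is invoked, the remainder is careful bookkeeping of the type already performed for Theorem~1.
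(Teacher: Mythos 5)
Your proposal is correct and follows essentially the same route as the paper: split the spectral decomposition \eqref{Eq: Mt} at index $k$, substitute $\bm{\phi}_i = \bm{\phi}_i^* + \bm{\tilde{\phi}}_i$ with $\lambda_i = \tilde{\lambda}_i$ for $i \le k$, identify the $\bm{\phi}_i^*{\bm{\phi}_i^*}^T$ piece with $c_i\bm{\chi}_i$, and bound the cross/quadratic terms and the tail via the triangle inequality and $\|D^{\pm1/2}\|$. The one point you flag as non-routine --- suppressing $|1-\alpha\tilde{\lambda}_i|^t$ via $|1-\alpha\tilde{\lambda}_i|\le 1$ --- is exactly the step the paper also invokes (without further comment) in passing from \eqref{Eq: subD} to \eqref{Eq: th2}.
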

\begin{proof}
Using \eqref{Eq: Mt} and separating the first $k$ terms, we get
\begin{align*}
\| M^t \mathbf{x}^0 - \sum_{i=1}^k c_i \bm{\chi}_i \| & = \|D^{-1/2}\sum_{i=1}^n(1 - \alpha \lambda_i)^t  \bm{\phi}_i \bm{\phi}_i^T D^{1/2} \mathbf{x}^0 - \sum_{i=1}^k c_i \bm{\chi}_i \|\\
& = \|D^{-1/2} \sum_{i=1}^k (1 - \alpha \lambda_i)^t  \bm{\phi}_i \bm{\phi}_i^T D^{1/2} \mathbf{x}^0 \\
&~~~+ D^{-1/2}\sum_{i=k+1}^n(1 - \alpha \lambda_i)^t  \bm{\phi}_i \bm{\phi}_i^T D^{1/2} \mathbf{x}^0 - \sum_{i=1}^k c_i \bm{\chi}_i \|.
\end{align*}
Substituting $\bm{\phi}_i = \bm{\phi}_i^* + \bm{\tilde{\phi}}_i$ and $\lambda_i = \lambda_i^* + \tilde{\lambda}_i$ in the above equation and using the fact that $\lambda_i^* = 0$ for $i=1, \ldots, k,$ we have
\begin{align}
\| M^t \mathbf{x}^0 - \sum_{i=1}^k c_i \bm{\chi}_i \| & = \|D^{-1/2}\sum_{i=1}^k(1 - \alpha \tilde{\lambda}_i)^t \left( \bm{\phi}_i^* {\bm{\phi}_i^*}^T + \bm{\phi}_i^* \bm{\tilde{\phi}}_i^T +  \bm{\tilde{\phi}}_i {\bm{\phi}_i^*}^T + \bm{\tilde{\phi}}_i {\bm{\tilde{\phi}}_i}^T \right) D^{1/2} \mathbf{x}^0 \nonumber \\
&~~~+ D^{-1/2}\sum_{i=k+1}^n(1 - \alpha \lambda_i)^t  \bm{\phi}_i \bm{\phi}_i^T D^{1/2} \mathbf{x}^0  - \sum_{i=1}^k c_i \bm{\chi}_i \| \nonumber \\
&=\| \sum_{i=1}^k (1 - \alpha \tilde{\lambda}_i)^t  D^{-1/2} \frac{D^{1/2}\bm{\chi}_i(D^{1/2}\bm{\chi}_i)^T}{\|D^{1/2}\bm{\chi}_i\|^2} D^{1/2} \mathbf{x}^0  \nonumber \\
& ~~~+ \sum_{i=1}^k (1 - \alpha \tilde{\lambda}_i)^t  D^{-1/2}\left(\bm{\phi}_i^* \bm{\tilde{\phi}}_i^T +  \bm{\tilde{\phi}}_i {\bm{\phi}_i^*}^T + \bm{\tilde{\phi}}_i {\bm{\tilde{\phi}}_i}^T\right) D^{1/2} \mathbf{x}^0  \nonumber  \\
&~~~+ D^{-1/2}\sum_{i=k+1}^n (1 - \alpha \lambda_i)^t  \bm{\phi}_i \bm{\phi}_i^T D^{1/2} \mathbf{x}^0 - \sum_{i=1}^k c_i \bm{\chi}_i \|, \label{Eq: ci}
\end{align}

where 
\begin{align*}
(1 - \alpha \tilde{\lambda}_i)^t D^{-1/2} \frac{D^{1/2}\bm{\chi}_i(D^{1/2}\bm{\chi}_i)^T}{\|D^{1/2}\bm{\chi}_i\|^2} D^{1/2} \mathbf{x}^0 & = (1 - \alpha \tilde{\lambda}_i)^t  \frac{\bm{\chi}_i ~ \bm{\chi}_i^T D^{1/2} D^{1/2} \mathbf{x}^0}{\|D^{1/2}\bm{\chi}_i\|^2} \\
&= (1 - \alpha \tilde{\lambda}_i)^t  \frac{\bm{\chi}_i^T D \mathbf{x}^0}{\mathbf{1}^T D \bm{\chi}_i} \bm{\chi}_i  \\
&= c_i \bm{\chi}_i.
\end{align*}

Substituting the above expression in \eqref{Eq: ci} and using the triangle inequality, we have
\begin{align*}
\| M^t \mathbf{x}^0 - \sum_{i=1}^k c_i \bm{\chi}_i \| &= \| \sum_{i=1}^k (1 - \alpha \tilde{\lambda}_i)^t  D^{-1/2}\left(\bm{\phi}_i^* \bm{\tilde{\phi}}_i^T +  \bm{\tilde{\phi}}_i {\bm{\phi}_i^*}^T + \bm{\tilde{\phi}}_i {\bm{\tilde{\phi}}_i}^T\right) D^{1/2} \mathbf{x}^0  \\
&~~~+ D^{-1/2}\sum_{i=k+1}^n (1 - \alpha \lambda_i)^t  \bm{\phi}_i \bm{\phi}_i^T D^{1/2} \mathbf{x}^0 \|  \\
& \le \| \sum_{i=1}^k (1 - \alpha \tilde{\lambda}_i)^t  D^{-1/2}\left(\bm{\phi}_i^* \bm{\tilde{\phi}}_i^T +  \bm{\tilde{\phi}}_i {\bm{\phi}_i^*}^T + \bm{\tilde{\phi}}_i {\bm{\tilde{\phi}}_i}^T\right) D^{1/2} \mathbf{x}^0  \| \\
&~~~+ \| D^{-1/2}\sum_{i=k+1}^n (1 - \alpha \lambda_i)^t  \bm{\phi}_i \bm{\phi}_i^T D^{1/2} \mathbf{x}^0 \|  \\
& \le  \sum_{i=1}^k | 1 - \alpha \tilde{\lambda}_i |^t  \| D^{-1/2} \| \left( \| \bm{\phi}_i^* \bm{\tilde{\phi}}_i^T \| +  \|\bm{\tilde{\phi}}_i {\bm{\phi}_i^*}^T \| + \| \bm{\tilde{\phi}}_i {\bm{\tilde{\phi}}_i}^T \| \right)  \| D^{1/2} \| ~ \|\mathbf{x}^0  \| \\
&~~~+ \| D^{-1/2} \|~ \| \sum_{i=k+1}^n (1 - \alpha \lambda_i)^t  \bm{\phi}_i \bm{\phi}_i^T \|  
~\| D^{1/2} \| ~\| \mathbf{x}^0 \|.  \numberthis \label{Eq: subD} 
\end{align*}
Since $\| \mathbf{x}^0\| \le 1, ~\|\bm{\phi}^*_i\| = 1$, ~$|1 - \alpha \tilde{\lambda}_i| \le 1$,  we can further simplify inequality \eqref{Eq: subD} as
\begin{align*}
\| M^t \mathbf{x}^0 - \sum_{i=1}^k c_i \bm{\chi}_i \| & \le  \sum_{i=1}^k | 1 - \alpha \tilde{\lambda}_i |^t  \| D^{-1/2} \| \left( 2\| \bm{\tilde{\phi}}_i \| +  \|\bm{\tilde{\phi}}_i \|^2  \right)  \| D^{1/2} \| \\
&~~~+ \| D^{-1/2} \|~ \| \sum_{i=k+1}^n  (1 - \alpha \lambda_i)^t  \bm{\phi}_i \bm{\phi}_i^T \|  
~\| D^{1/2} \| \\
& \le  \sum_{i=1}^k   \left( 2\| \bm{\tilde{\phi}}_i \| +  \|\bm{\tilde{\phi}}_i \|^2  \right) \frac{\max_j \sqrt{d_j}}{\min_j \sqrt{d_j}} + \max_{\ell > k} |1 - \alpha \lambda_{\ell}|^t \frac{\max_j\sqrt{d_j}}{\min_j\sqrt{d_j}} \\
&= \left( \sum_{i=1}^k  \left( 2\| \bm{\tilde{\phi}}_i \| +  \|\bm{\tilde{\phi}}_i \|^2  \right)  + \max_{\ell > k} |1 - \alpha \lambda_{\ell}|^t \right) \frac{\max_j\sqrt{d_j}}{\min_j \sqrt{d_j}}. \numberthis  \label{Eq: th2} 
\end{align*}
\end{proof}
Note that we can always choose $\alpha$ such that $ (1 - \alpha \lambda_{k+1}) =  \max_{\ell > k} |1 - \alpha \lambda_{\ell}|$ and the above expression then becomes
\[
\| M^t \mathbf{x}^0 - \sum_{i=1}^k c_i \bm{\chi}_i \|   \le    \left( \sum_{i=1}^k  \left( 2\| \bm{\tilde{\phi}}_i \| +  \|\bm{\tilde{\phi}}_i \|^2  \right)  + (1 - \alpha \lambda_{k+1})^t \right) \frac{\max_j\sqrt{d_j}}{\min_j \sqrt{d_j}}.
\]

Observe that $\lambda_i = \tilde{\lambda}_i$ for $i = 1,\ldots,k$. Thus, assuming that the perturbation is small, the first $k$ eigenvalues of the Laplacian are close to zero. If the eigengap $(\lambda_{k+1} - \lambda_k)$ is large enough, then for some $t>0$, we will have both $(1- \alpha \lambda_k) = (1 -  \alpha \tilde{\lambda}_k)^t \ge 1 - \delta$ for a small $\delta >0$, and $(1 - \alpha \lambda_{k+1})^t \le \epsilon$ for a small $\epsilon >0$. This results in the effective vanishing of the term $(1 - \alpha \lambda_{k+1})^t$ in the above expression after a sufficient number of iterations and $c_i$'s being bounded away from zero. According to Theorem \ref{Thm: gen}, we will then have an approximate linear combination of the $k$ characteristic vectors of the graph i.e., $\| M^t \mathbf{x}^0 - \sum_{i=1}^k c_i \chi_i \| $ will be small. Small perturbation assumption also leads to $\|\bm{\tilde{\phi}}_i \|$ being relatively small. Note that this eigengap condition is equivalent to Assumption A1 in \cite{Ng2002}. \par

It is worth noting that as the number of clusters $k$ grows, it becomes increasingly difficult to distinguish the clusters from the vector $M^t \mathbf{x}^0$ using the classical $k$-means algorithm because the perturbation $\bm{\tilde{\phi}}_i$  will accompany the $k$ eigenvectors in $M^t \mathbf{x}^0$. Thus, we devise a recursive bi-partitioning mechanism to find the clusters.

\subsection{Clustering algorithm}
\label{Sec: Stop}
Our analysis in the previous section suggests that points in the same cluster mix quickly whereas points in different clusters mix slowly. Simon and Ando's \cite{Simon1961} theory of nearly completely decomposable systems also demonstrates that states in the same subsystem achieve local equilibria long before the system as a whole attains a global equilibrium. Therefore, an efficient clustering algorithm should stop when a local equilibrium is achieved. We can then distinguish the clusters based on mixing of the points. The two clusters in this case correspond to aggregation of elements of $\mathbf{x}^t$. Thus a simple search for the largest \emph{gap} in the sorted $\mathbf{x}^t$ can reveal the clusters. This cluster separating \emph{gap} is directly proportional to $b$, since we initialize $\mathbf{x}^0$ by choosing $n$ points uniformly at random from an interval $[0,b]$. Furthermore, it is inversely proportional to the size $n$ of the dataset. Thus we define the \emph{gap} between two consecutive elements of sorted $\mathbf{x}^t$ as:
\begin{equation}
gap(i) = \left \{ \begin{array}{c@{\qquad} l}
x^t_{i+1} - x^t_i  &  \text{if~ } x^t_{i+1} - x^t_i \ge \frac{b}{2n},\\
0 & \text{otherwise}.
\end{array} \right. \label{Eq: Gap}
\end{equation}
In each recursive call, the algorithm terminates upon finding the largest \emph{gap} and bi-partitions the data based on this \emph{gap}. If the algorithm fails to find a nonzero \emph{gap} in a recursive call, then the indexing set of $\mathbf{x}$ in this call corresponds a cluster. This leads us to Algorithm \ref{Alg: agents-r1}  (RARD - Theoretical). \par

\begin{algorithm}
\caption{Recursive Agent-Based Resource Diffusion (RARD) - Theoretical}\label{Alg: agents-r1}
\begin{algorithmic}[1]
\State{\textbf{Input:} Matrix $M = (1 - \alpha)I + \alpha D^{-1} W$ and a tolerance $\epsilon$}
\Procedure{$C = $\,RARD}{$M,~ \epsilon$}
\State{$n \gets rowsize(M)$}
\State{Initialized $\mathbf{x}^0$ by choosing $n$ points uniformly at random from $[0,b]$.}
\Repeat
\State $\mathbf{x}^{t+1} \gets M\mathbf{x}^t $
 \Until{$(1 - \alpha \lambda_{k+1})^t \le \epsilon$}
\State{Sort$(\mathbf{x}^{t+1})$; find the largest \emph{gap} using Equation	\eqref{Eq: Gap}.}
\If {\emph{gap} is not found}
\State \textbf{return} 
\EndIf
\State{bi-partition the indexing set of $\mathbf{x}^{t+1}$ based on largest \emph{gap} into $i_1$ and $i_2$.}
\State{$C\gets\textsc{RARD}(M(i_1,i_1), ~\epsilon)$,\qquad $C\gets\textsc{RARD}(M(i_2,i_2), ~\epsilon)$}
\EndProcedure
\State{\textbf{Output:} Clustering $C$.}
\end{algorithmic}
\end{algorithm}

In practice, we do not know the eigenvalues of $M$. Thus, in our implementation we start the procedure with an initial tolerance $\epsilon_0$ on mixing of $\mathbf{x}^t$. When the tolerance is achieved, we search for a nonzero \emph{gap} in the vector sort($\mathbf{x}^{t+1}$). If a \emph{gap} is found the dataset is bi-partitioned based on the largest \emph{gap}. In a recursive fashion, the bi-partitioning procedure is then applied to both resulting partitions. On the other hand, if the \emph{gap} is not found we decrease the tolerance and reevaluate for a \emph{gap} after the new tolerance is attained. A cluster is formed if the procedure can not find a \emph{gap} using either a maximum number of iterations $t_{\max}$ or a minimum tolerance $\epsilon_{\min}$. This algorithm is Algorithm \ref{Alg: agents-r} (RARD - Implemented). Since our algorithm finds clusters in a recursive fashion by bi-partitioning the dataset in each recursive step, Theorem \ref{Thm: gen} can be reduced to the following corollary.
\begin{cor}
Suppose the dataset contains 2 clusters and let $\mathbf{x}^0$ be any vector such that each $x_i^0>0$ and $(\mathbf{x}^0)^T \mathbf{1} = 1$, then we have 
\[
\| M^t \mathbf{x}^0 - \sum_{i=1}^2 c_i \bm{\chi}_i \|   \le   \left( \sum_{i=1}^2 2\| \bm{\tilde{\phi}}_i \| +  \|\bm{\tilde{\phi}}_i \|^2   + (1 - \alpha \lambda_{3})^t \right) \frac{\max_j\sqrt{d_j}}{\min_j \sqrt{d_j}},
\]
where $c_i =  (1 - \alpha \tilde{\lambda}_i)^{t} \frac{\bm{\chi}_i^T D \mathbf{x}^0 }{ \mathbf{1}^TD \bm{\chi}_i}$ and $d_j$ is the degree of the $j^{th}$ node.
\end{cor}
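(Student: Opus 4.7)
The plan is to observe that this corollary is simply the specialization of Theorem~\ref{Thm: gen} to the case $k=2$, so the entire task reduces to instantiating that theorem and then simplifying the $\max$-term. Since the hypotheses ($\mathbf{x}^0$ a probability vector with strictly positive entries, dataset consisting of $k$ clusters) are exactly those of Theorem~\ref{Thm: gen} with $k$ set to $2$, no independent derivation is needed: I would quote the theorem directly to obtain
\[
\| M^t \mathbf{x}^0 - \sum_{i=1}^{2} c_i \bm{\chi}_i \|   \le    \left( \sum_{i=1}^{2}  \bigl( 2\| \bm{\tilde{\phi}}_i \| +  \|\bm{\tilde{\phi}}_i \|^2  \bigr)  + \max_{\ell > 2} |1 - \alpha \lambda_{\ell}|^t \right) \frac{\max_j\sqrt{d_j}}{\min_j \sqrt{d_j}},
\]
with $c_i$ defined exactly as in the corollary.

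The only remaining step is to rewrite $\max_{\ell > 2} |1 - \alpha \lambda_\ell|^t$ as $(1 - \alpha \lambda_3)^t$. Here I would invoke the remark stated immediately after Theorem~\ref{Thm: gen}: since the eigenvalues of $L$ lie in $[0,2]$ and are ordered as $0 = \lambda_1 \le \lambda_2 \le \cdots \le \lambda_n \le 2$, one can always pick $\alpha \in [0,1]$ so that $|1 - \alpha \lambda_\ell|$ for $\ell > k$ is maximized at $\ell = k+1$, giving $\max_{\ell > k} |1 - \alpha \lambda_\ell| = 1 - \alpha \lambda_{k+1}$. Applying this with $k = 2$ yields $(1 - \alpha \lambda_3)^t$, which is precisely the third term in the corollary's bound.

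The main (and essentially only) obstacle is purely bookkeeping: confirming that the index $k+1 = 3$ realizes the max under an admissible choice of $\alpha$, and that the expression for $c_i$ carries over unchanged from the general theorem. Both are immediate from the structure of Theorem~\ref{Thm: gen}, so I expect the proof to be a one- or two-line specialization rather than a new argument.
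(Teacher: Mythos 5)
Your proposal is correct and matches the paper's own (implicit) justification: the corollary is obtained by setting $k=2$ in Theorem~\ref{Thm: gen} and then using the remark following that theorem to replace $\max_{\ell>2}|1-\alpha\lambda_\ell|^t$ by $(1-\alpha\lambda_3)^t$ for a suitable choice of $\alpha\in[0,1]$. No further argument is needed.
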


\begin{algorithm}
\caption{Recursive Agent-Based Resource Diffusion (RARD) - Implemented}\label{Alg: agents-r}
\begin{algorithmic}[1]
\State{\textbf{Input:} Matrix $M = (1 - \alpha)I + \alpha D^{-1} W$ and initial tolerance $\epsilon_0$}
\Procedure{$C = $\,RARD}{$M, ~\epsilon_0$}
\State{$n \gets rowsize(M)$}
\State{Initialized $\mathbf{x}^0$ by choosing $n$ points uniformly at random from $[0,b]$.}
\State{Initialize $\epsilon \gets \epsilon_0$}
\Repeat
\Repeat
\State $\mathbf{x}^{t+1} \gets M\mathbf{x}^t , ~~~~~~  y^{t+1} \gets \|\mathbf{x}^{t+1} - \mathbf{x}^t\| $
\Until{ $| y^{t+1} - y^t| \le \epsilon$}
\State{Sort$(\mathbf{x}^{t+1})$; find the largest \emph{gap} using Equation	\eqref{Eq: Gap}.}
\If {$\epsilon \le \epsilon_{\min} \textbf{ or } t \ge t_{\max}$}
\State \textbf{return} 
\EndIf
\State $\epsilon \gets \epsilon/2$
\Until{A nonzero \emph{gap} is found}
\State{bi-partition the indexing set of $\mathbf{x}^{t+1}$ based on largest \emph{gap} into $i_1$ and $i_2$.}
\State{$C\gets\textsc{RARD}(M(i_1,i_1), ~\epsilon_0)$,\qquad $C\gets\textsc{RARD}(M(i_2,i_2), ~\epsilon_0)$}
\EndProcedure
\State{\textbf{Output:} Clustering $C$.}
\end{algorithmic}
\end{algorithm}



\subsection{Time complexity}
Each iteration of Algorithm \ref{Alg: points} involves multiplication of two matrices of size $n \times n$ and $n \times d$, which requires $O(\bar{n}d)$ operations, where $\bar{n}$ is the number of nonzero entries in the matrix $W$. Algorithm \ref{Alg: points} also calls the $k$-means algorithm, whose running time for one iteration is $O(nkd)$. So the complexity of Algorithm \ref{Alg: points} is $O(\bar{n}d t_{\max}) + O(nkd)$, where $t_{\max}$ is the maximum number of iterations. 

\begin{figure}[!ht]
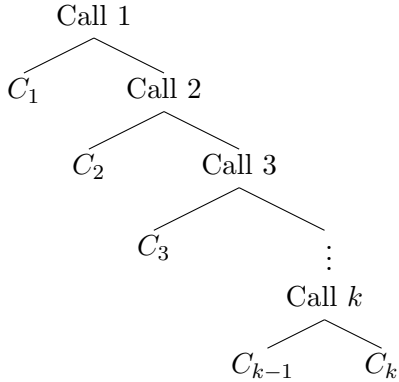

\Tree[.{Call 1} [.$C_1$ ]
          [.{Call 2} [.$C_2$ ]
                [.{Call 3} [.$C_3$ ]
                          [.{~~~~\vdots~~~~\\ Call $k$}  $~~~C_{k-1}~~~~$
                                      $C_{k}$ ]]]]
\caption{A depiction of the worst case.}
\label{Fig: worst}
\end{figure}

Algorithm \ref{Alg: agents-r} is a recursive algorithm. For the sake of intuitive analysis, we assume that $n = 2^j$ for $j\ge1$ and that all clusters are of equal size i.e., each cluster has $n/k$ points. we further assume that $k = 2^\ell$ for $\ell \ge 1$. In each recursive call, the algorithm performs a maximum of $t_{\max}$ sparse matrix-vector multiplications which require $O(\bar{n}t_{\max})$ operations. It also makes two more recursive calls and sorts $n$ numbers except for the base case. Thus each non-base call takes $O(\bar{n}t_{\max})+O(n\log n)$ operations, where $O(n \log n)$ is the complexity of sorting $n$ numbers. Let $T(n,k,t_{\max})$ be the time required to find $k$ clusters in a dataset of size $n$ by Algorithm \ref{Alg: agents-r}, then we have
\begin{align*}
T(n,k,t_{\max}) &= 2T\left( \frac{n}{2},k,t_{\max} \right) + c \bar{n} t_{\max} + n\log n  \\
& = 4T\left( \frac{n}{4},k,t_{\max} \right) + c \bar{n} t_{\max} + c \frac{\bar{n}}{2} t_{\max} +  n\log n + \frac{n}{2}\log \frac{n}{2}  \\
&\quad \vdots \\
& = 2^\ell T\left( \frac{n}{2^\ell} ,k, t_{\max} \right) + c \bar{n} t_{\max} \sum_{s=0}^{\ell - 1} \frac{1}{2^s} + n\sum_{s=0}^{\ell - 1} \frac{1}{2^s}\log \frac{n}{2^s} \\
& = 2^\ell c \frac{\bar{n}}{2^\ell} t_{\max} + c \bar{n} t_{\max} \sum_{s=0}^{\ell - 1} \frac{1}{2^s} + n \log n \sum_{s=0}^{\ell - 1} \frac{1}{2^s}  - n \sum_{s=0}^{\ell - 1} \frac{s}{2^s} \\
& =  c \bar{n} t_{\max} + (c \bar{n} t_{\max} + n \log n) \sum_{s=0}^{\ell - 1} \frac{1}{2^s} - n \sum_{s=0}^{\ell - 1} \frac{s}{2^s} \\
& = O(\bar{n} t_{\max}) + O(\bar{n} t_{\max}) + O(n \log n) - O(n)  \\
&= O(\bar{n} t_{\max}) + O(n \log n),
\end{align*}
where $c$ is a constant and we have used the fact that $\sum_{s=0}^{\ell} 1/2^s \le 2$ and $\sum_{s=0}^{\ell} s/2^s \le 2$. Similarly, if we assume a different split of the data in each recursive call, such as $1/3$ and $2/3$, or $1/4$ and $3/4$ etc., it easily follows from the above analysis that the running time of the Algorithm \ref{Alg: agents-r} remains $O(\bar{n}t_{\max})+ O(n \log n)$. The worst case arises when the dataset comprises a big cluster $\mathcal{V}_k$ of size (say) $n/2$ and rest of the dataset constitutes the other $k-1$ clusters $\mathcal{V}_1, \mathcal{V}_2, \ldots \mathcal{V}_{k-1}$. In such a scenario, if a smaller cluster from one of the $k-1$ clusters, say $\mathcal{V}_1$, splits from the data in the first recursive call, and in the second call on the dataset containing the big cluster, another smaller cluster $\mathcal{V}_2$ splits from the data and so on. Continuing in this way, suppose that the big cluster is the only cluster left in the last recursive call as shown in Figure \ref{Fig: worst}, then we have made $k$ recursive calls on a dataset of size at least $n/2$ making the running time of the algorithm $O(\bar{n}k t_{\max})+ O(n k\log n)$. However, if $\mathcal{V}_k$ splits from the dataset early on in the recursion, the running time remains $O(\bar{n} t_{\max})+ O(n \log n)$. Thus, unless the dataset contains a big cluster encompassing a dominant fraction of the dataset, the running time of the Algorithm \ref{Alg: agents-r} is $O(\bar{n} t_{\max})+ O(n \log n)$. We use $p$ nearest neighbors to compute the similarities between points, which results in $O(pn)$ nonzero entries in $W$. Since $p$ is a constant typically between 4 and 10, we conclude that number of nonzero entries in $W$ is $O(n)$. Thus for the $p$-nearest neighbor similarity function, the complexity of Algorithm \ref{Alg: agents-r} is $O(n t_{\max})+ O(n \log n)$.


\section{Simulation results}
We begin with a toy example to illustrate the mechanics of Algorithm \ref{Alg: agents-r}. Suppose that we have the  following normalized similarity matrix for a dataset with three clusters. The intra-cluster similarities of the three clusters are represented by red, green and blue colors. 

\[
 \left[
\begin{array}{c@{~~} c@{~~} c @{~~} c @{~~} c @{~~} c @{~~} c @{~~} c @{~~} c @{~~} c }
\color{red}0		&	\color{red}.5	& \color{red}.45	& .025	& .025	&	0	&	0	&	0	&	0	&	0	\\
\color{red}.4		&	\color{red}0	& \color{red}.55	&	0	&	0	&	0	& .05	&	0	&	0	&	0   \\
\color{red}.3		&	\color{red}.7	&	\color{red}0	&	0	&	0	&	0	&	0	&	0	&	0	&	0	\\
0		&	.01	&	0	&	\color{ForestGreen}0	&\color{ForestGreen}	.3	&	\color{ForestGreen}.4	&	\color{ForestGreen}.28	&	0	&	.01	&	0	\\
0		&	0	&	0	&	\color{ForestGreen}.4	&	\color{ForestGreen}0	&	\color{ForestGreen}.3	&	\color{ForestGreen}.3	&	0	&	0	&	0	\\
0		&  	0	&	.1	&	\color{ForestGreen}.25	&	\color{ForestGreen}.25	&	\color{ForestGreen}0	&	\color{ForestGreen}.4	&	0	&	0	&	0	\\
.01		&	0	&	0	&	\color{ForestGreen}.4	&	\color{ForestGreen}.3	&	\color{ForestGreen}.27	&	\color{ForestGreen}0	&.02	&	0	&	0	\\
0		&	.01	&	0	&	0	&	0	&	0	&	0	&	\color{blue}0	&	\color{blue}0.5	&	\color{blue}0.49	\\
0		&	0	&	0	&	.02	&	0	&	0	&	0	&	\color{blue}0.49	&	\color{blue}00	&	\color{blue}0.49	\\
0		&	0	&	0	&	0	&	0	&	0	&	0	&	\color{blue}0.7	&	\color{blue}0.3	&	\color{blue}00
\end{array} \right]
\]

The first call to the RARD procedure in Algorithm \ref{Alg: agents-r} starts by picking 10 points uniformly at random from $[0,100]$. Stopping criterion is met after 27 iterations giving us the following vector. A bipartition of the dataset based on the \emph{gap} criteria separates the blue cluster from the other two clusters. 

\[
\left [
\begin{array}{c}
\color{red}31.92	\\
\color{red}32.15	\\
\color{red}31.92	\\
\color{ForestGreen}32.64	\\
\color{ForestGreen}32.60	\\
\color{ForestGreen}32.55	\\
\color{ForestGreen}32.71	\\
\color{blue}39.57	\\
\color{blue}39.53	\\
\color{blue}39.63
\end{array}
\right] 
\] 

Algorithm \ref{Alg: agents-r} then makes two recursive calls to the RARD procedure on the re-normalized sub-matrices corresponding to the two partitions. The recursive call on the first partition containing blue and green clusters results in a bipartition of the data into two clusters giving us the following vector, whereas the recursive call on the second partition containing the blue cluster does not find any \emph{gap} satisfying the stopping criteria, so the algorithm identifies it as a cluster and returns out of the recursion.  

\[
\left[
\begin{array}{c}
\color{red}77.63 	\\
\color{red}78.03 	\\
\color{red}78.07 	\\
\color{ForestGreen}67.06 	\\
\color{ForestGreen}66.91 	\\
\color{ForestGreen}67.78 	\\
\color{ForestGreen}66.98 	\\  
\end{array} \right]
\left[
\begin{array}{c}
 \color{blue}  21.12 \\
 \color{blue}   20.98\\
 \color{blue}   21.28
\end{array}
\right]
\] 

The recursive call on the red and green clusters, then makes two further calls to the RARD procedure, one on the red and one on the green cluster. These two calls do not find a bipartition in the data and return out of the recursion identifying red and green clusters.

In the following sections, we demonstrate the efficiency of the proposed algorithm on a variety of synthetic and real datasets. In particular, we show that Algorithm \ref{Alg: agents-r} can identify clusters of complex shapes and varying sizes in Section \ref{Sec: Syn} and compare its accuracy with the normalized cut algorithm \cite{Shi2000}. We have used $p$-neighbor similarity measure to construct the similarity graph for all the experiments. In Section \ref{Sec: Scale}, we portray the scalability and speed of Algorithm \ref{Alg: agents-r} by applying it to large-scale stochastic block models. Finally in Section \ref{Sec: Real}, we run Algorithm \ref{Alg: agents-r} on two real datasets and compare its accuracy and speed with normalized cut algorithm, fast approximate spectral clustering \cite{Yan2009} and Nystrom method \cite{Fowlkes2004}. We implement all the algorithms in MATLAB 8.4.0 and conduct experiments on a machine with Intel Core i7 3.40GHz CPU and 16GB memory.


\subsection{Performance evaluation}
Mutual information is a symmetric measure to quantify the information shared between two distributions. It is widely used as a measure to calculate the shared information between two clusterings. Let $\mathscr{V}$ denote the cluster labels and $\mathscr{V}^\prime$ be the clustering obtained by an algorithm. Their mutual information is defined as follows:
\[
MI(\mathscr{V}, \mathscr{V}^\prime) = \sum_{\mathcal{V}_i \in \mathscr{V}, ~ \mathcal{V}_j^\prime \in \mathscr{V}^\prime} Pr(\mathcal{V}_i, \mathcal{V}_j^\prime) ~\log\left( \frac{Pr(\mathcal{V}_i, \mathcal{V}_j^\prime)}{Pr(\mathcal{V}_i) Pr( \mathcal{V}_j^\prime)}	 \right)
\]
where $Pr(\mathcal{V}_i)$ and $Pr(\mathcal{V}_j^\prime)$ are the probabilities that an arbitrary point belongs to cluster $\mathcal{V}_i$ in clustering $\mathscr{V}$ and $\mathcal{V}_j^\prime$ in clustering $\mathscr{V^\prime}$  respectively, i.e.,
\[
Pr(\mathcal{V}_i) = \frac{|\mathcal{V}_i|}{n} \text{ ~and~ } Pr(\mathcal{V}_j^\prime) = \frac{| \mathcal{V}_j^\prime|}{n}
\]
 $Pr(\mathcal{V}_i, \mathcal{V}_j^\prime)$ is the joint probability that an arbitrary point lies in both clusters $\mathcal{V}_i$ and $\mathcal{V}_j^\prime$ in clusterings $\mathscr{V}$ and $\mathscr{V^\prime}$ respectively, i.e.,
\[
Pr(\mathcal{V}_i, \mathcal{V}_j^\prime) = \frac{|\mathcal{V}_i \cap \mathcal{V}_j^\prime|}{n}
\]
For the ease of interpretation, we use normalized mutual information defined as:
\[
NMI(\mathscr{V}, \mathscr{V}^\prime) = \frac{MI(\mathscr{V}, \mathscr{V}^\prime)}{\sqrt{H(\mathscr{V})\, H( \mathscr{V}^\prime)}}
\]
where $H(\mathscr{V})$ is the entropy of $\mathscr{V}$ given by:
\[
H(\mathscr{V}) = -\sum_{\mathcal{V}_i \in \mathscr{V}} Pr(\mathcal{V}_i) \log (Pr(\mathcal{V}_i))
\]
It is easy to verify that $0 \le NMI(\mathscr{V}, \mathscr{V}^\prime) \le 1$. NMI is 1 when the two clusterings are identical and 0 when the clusterings are independent.

\begin{figure}[!t]
     \centering
     \subfloat[][Mixture of Gaussians]{\includegraphics[width=.25\linewidth]{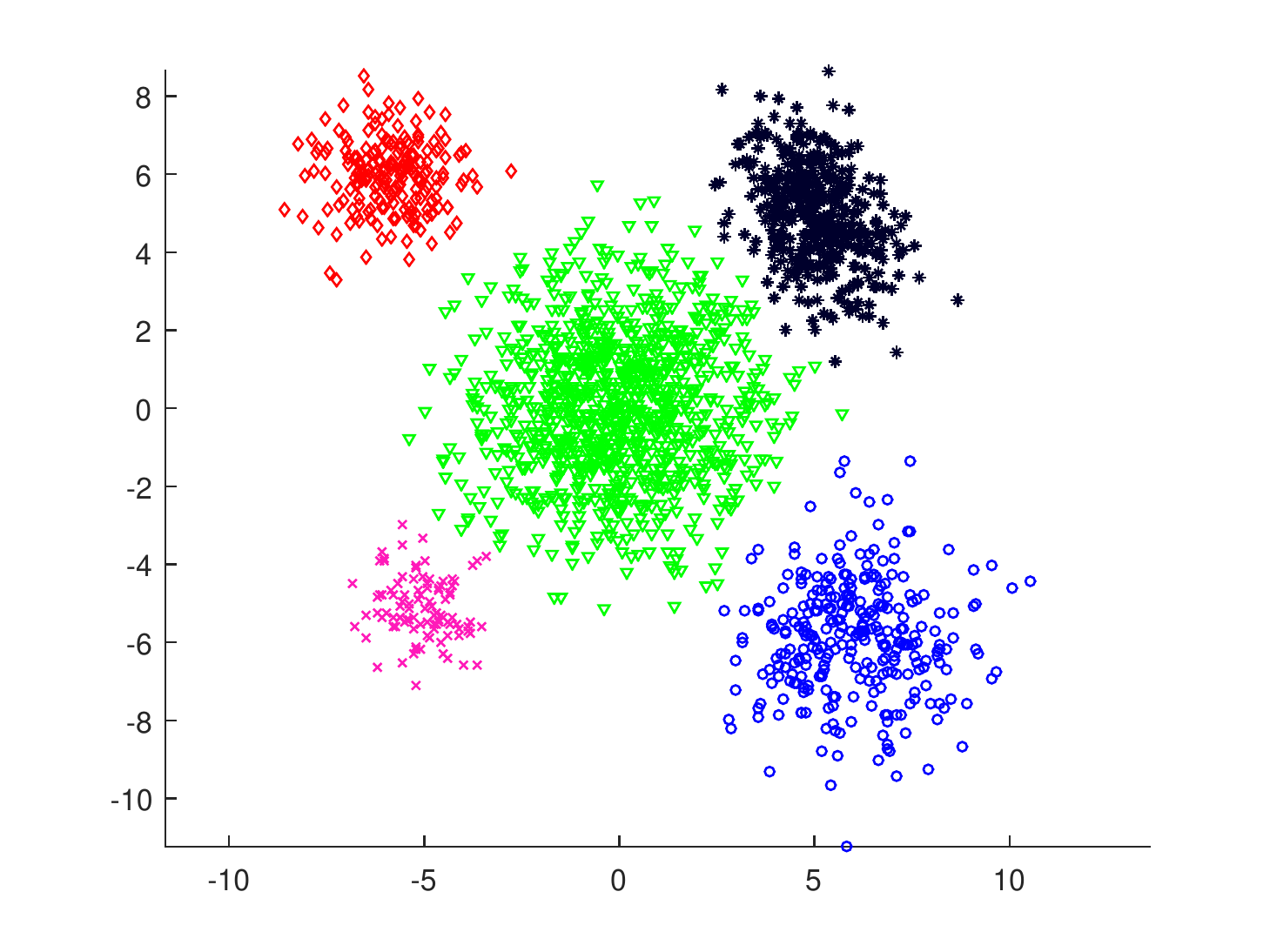}\label{Fig: alg1-1}}
     \subfloat[][Clustering aggregation]{\includegraphics[width=.25\linewidth]{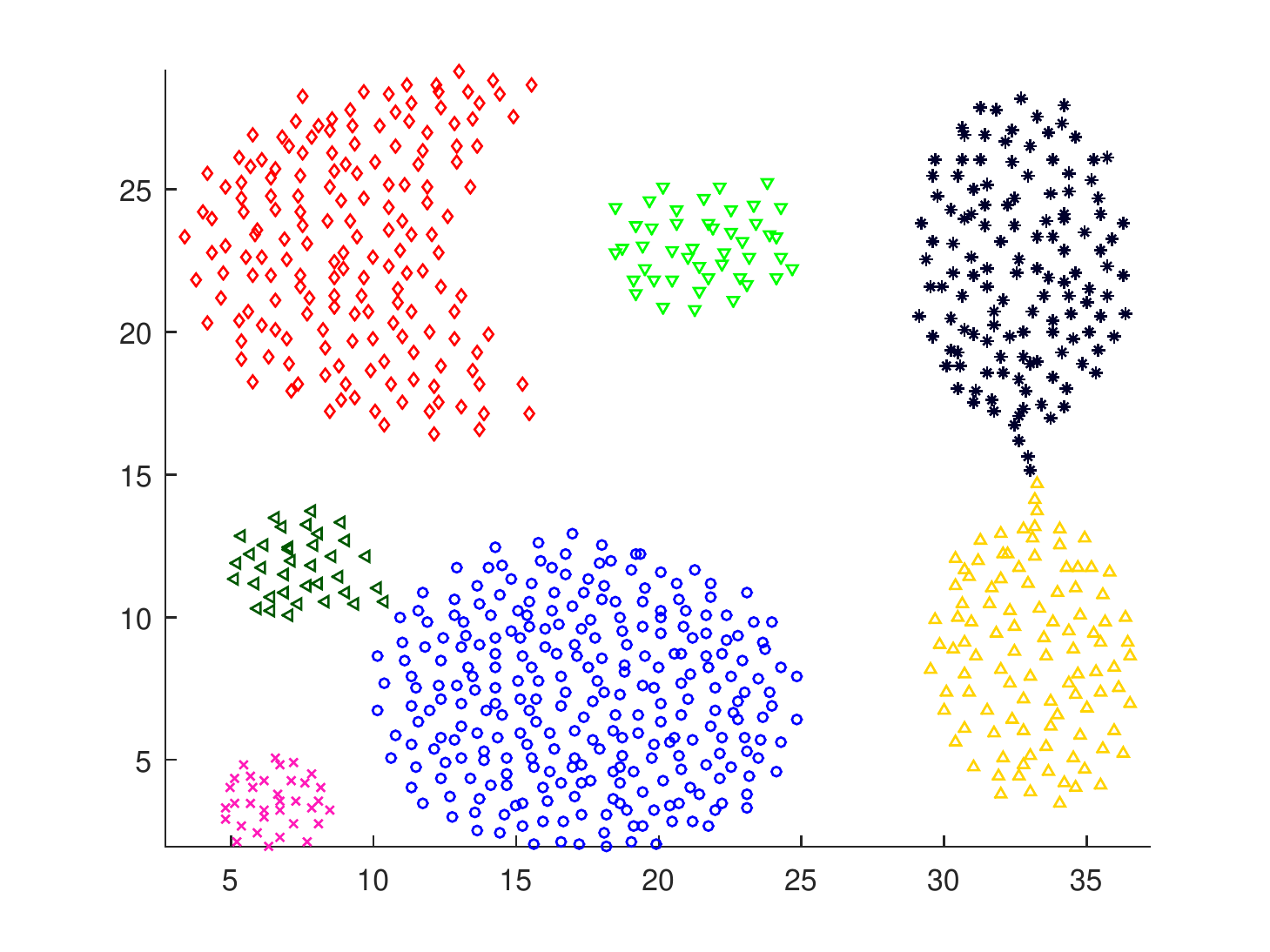}\label{Fig: alg1-2}}
     \subfloat[][Two crescents]{\includegraphics[width=.25\linewidth]{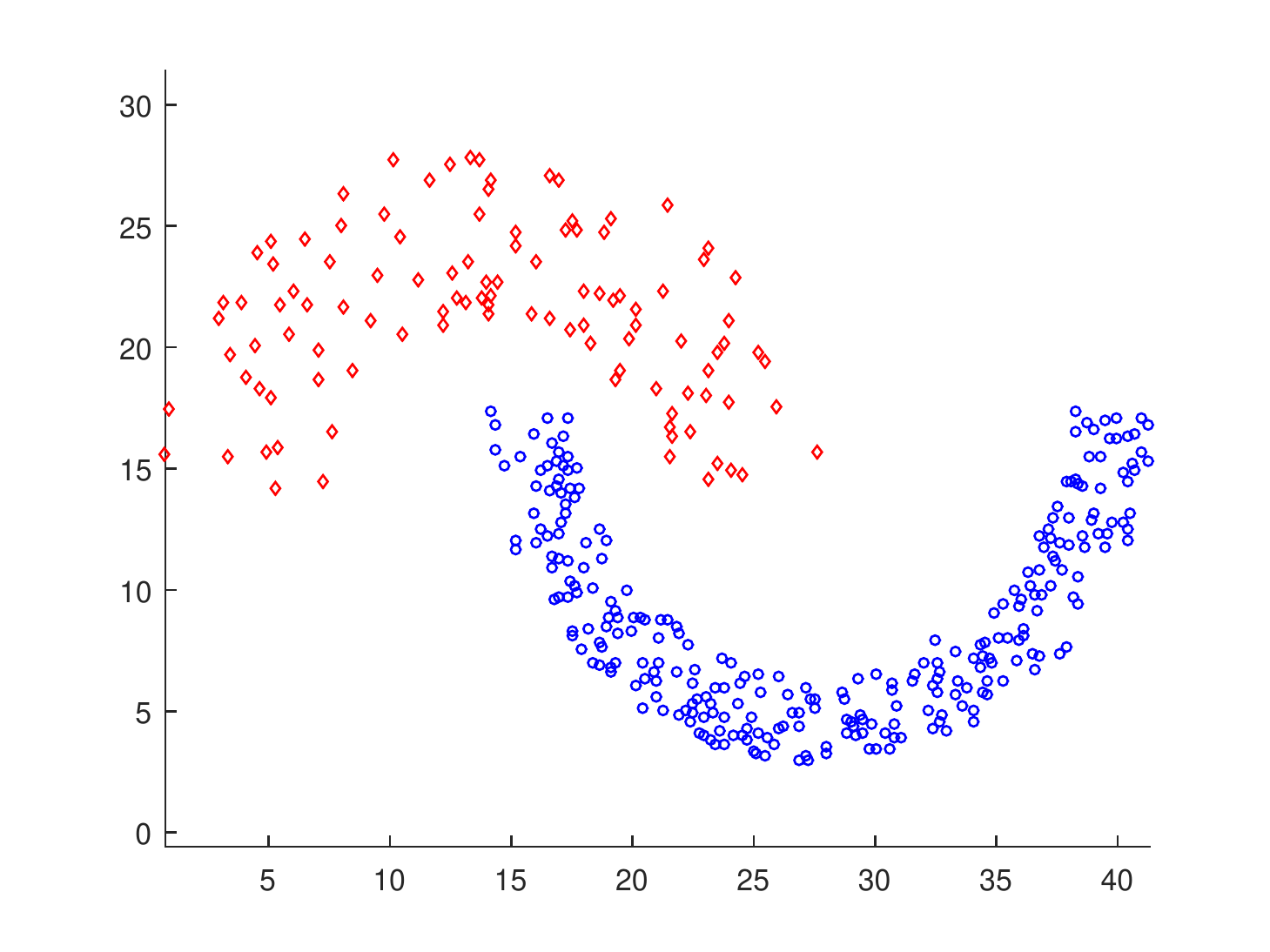}\label{Fig: alg1-3}}
     \subfloat[][Half ellipses]{\includegraphics[width=.25\linewidth]{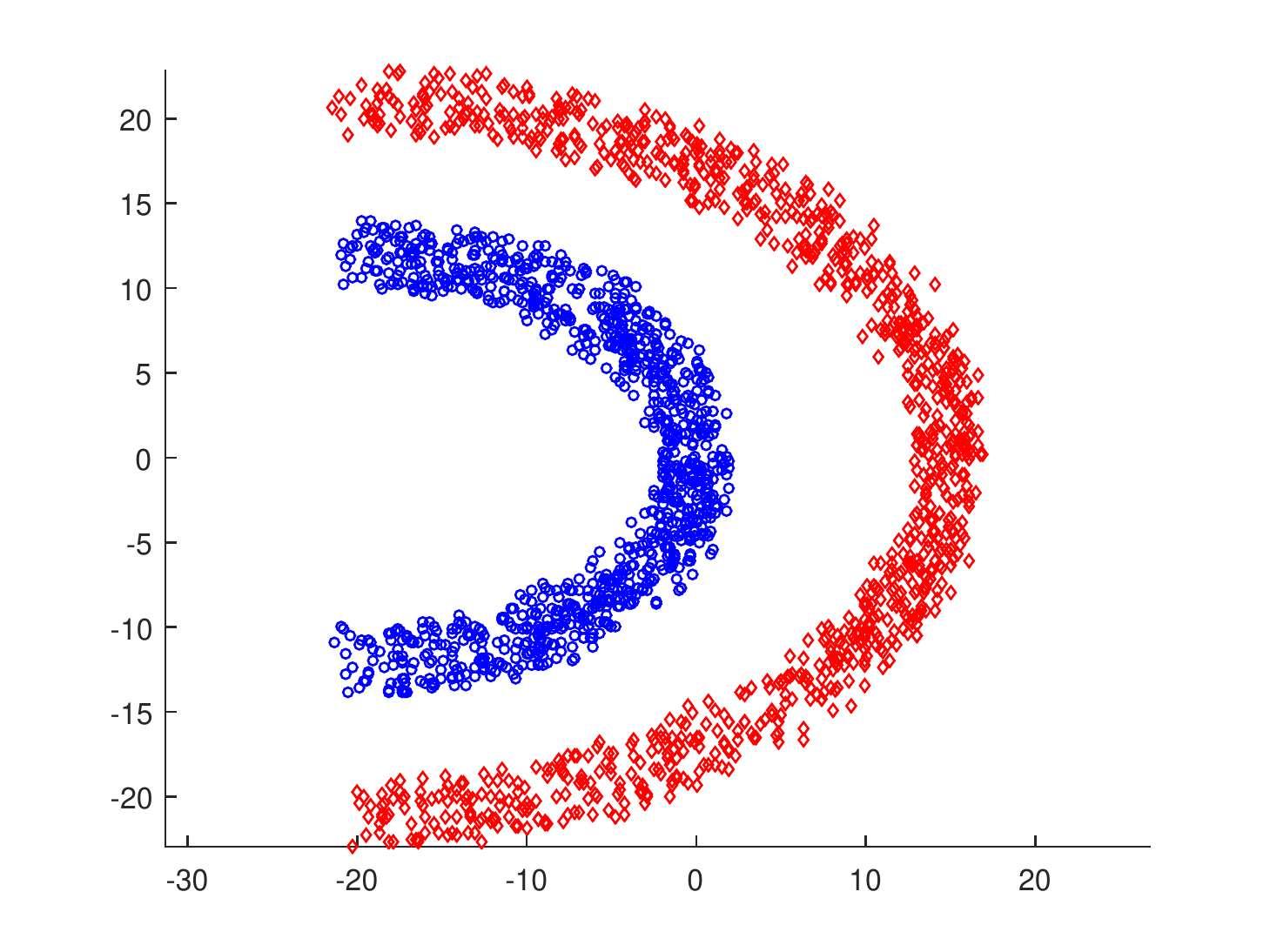}\label{Fig: alg1-4}}               
     \par
     \caption{Clustering result of Algorithm \ref{Alg: agents-r} on synthetic datasets.}
     \label{Fig: alg1}
\end{figure}
\begin{table}[!t]
\caption{Results of 50 runs of Algorithm \ref{Alg: agents-r} on synthetic datasets. Parameter settings $\alpha = 1$ and $\epsilon_0 = 10^{-2}, ~10^{-3}, ~10^{-3}, ~10^{-4}$ top to bottom. NCut shows the average normalized cut defined in Equation \eqref{Eq: Ncut} over 50 simulations with standard error. NMI is the mean normalized mutual information in 50 runs with standard error. Runtime shows the average computation time of a simulation. }
\label{Tab: algo2_r}
\begin{center}
\begin{tabular}{l c c c }
\toprule
{\bf Datasets}  &{\bf NCut} &{\bf NMI}  &{\bf Runtime} \\ 
\toprule 
Mixture of Gaussians & $0.0052\pm 0.0000$ & $97.12\pm 0.03$  & 0.687 \\
\midrule
Clustering aggregation &  $0.0323\pm 0.0000$ & $99.58\pm 0.03$ & 0.106 \\
\midrule
Two crescents & $0.0052\pm0$ & $100\pm 0$& 0.092 \\
\midrule
Half ellipses &  $0\pm 0$ & $100\pm 0$ & 0.969 \\ 
\hline 
\end{tabular}
\end{center}
\end{table}

\subsection{Synthetic datasets}
\label{Sec: Syn}
Four two-dimensional synthetic datasets have been used to compare Algorithm \ref{Alg: agents-r} to the normalized cut algorithm \cite{Shi2000}. The details of the datasets are given in Appendix \ref{Ap: Exam}. Table \ref{Tab: algo2_r} portrays the results of RARD algorithm. These results depict that our recursive implementation is very accurate in identifying clusters of complex shapes and different sizes. Small standard errors emphasize that RARD algorithm has little dependence on the initial vector $\mathbf{x}^0$. 

\begin{table}[t!]
\caption{Computation time (in seconds) of Algorithm \ref{Alg: agents-r} on SBMs with $p=0.5$ and $q = 0.01$. The time shown is averaged over 50 simulations on different SBMs. All partitions are exactly recovered. }
\label{Tab: stock}
\begin{center}
\begin{tabular}{c c c c  c  c c c c c c c c}
\toprule
$n$  & \multicolumn{3}{c}{$15,000$} & \multicolumn{3}{c}{$30,000$} & \multicolumn{3}{c}{$60,000$}  \\
\midrule
{$k$}  & 5 & 10 & 15 & 5 & 10 & 15 & 5 & 10 & 15 \\
\midrule
\textbf{RARD}  &  4.19 &   3.51 &   3.23 & 14.47 & 11.59 &  10.46  & 78.21 &  53.27 &  45.44  \\
\midrule
\textbf{Normalized Cut}  &   69.46  &  121.14 &    250.07 & \multicolumn{6}{c}{Out of Memory}  \\
\bottomrule
\end{tabular}
\end{center}
\end{table}

\subsection{Scalability}
\label{Sec: Scale}
We apply Algorithm \ref{Alg: agents-r} to stochastic block model (SBM) graphs to illustrate its scalability to large datasets with many clusters. We also compare the runtime with normalized cut algorithm \cite{Shi2000}. In the basic form, a SBM with same size blocks (clusters) is defined by four parameters; $n$, the number of vertices; $k$, the number of blocks (clusters); $p$, the probability of an edge between two points in the same cluster and $q$, the probability of an edge between two points in different clusters. Running time of the algorithm on various size SBMs is shown in Table \ref{Tab: stock}. Runtimes shown are average times for 50 different SBMs. Observe that RARD algorithm is significantly faster than normalized cut. It also consumes less memory as compared to normalized cut. For each model, the RARD algorithm recovers all the clusters exactly in each run with $p = 0.5$ and $q =0.01$. Each node shares roughly $pn/k$ edges within the cluster and $q(n - n/k)$ edges across the cluster. For example with $n = 30,000$ and $k = 10$, a node approximately has an edge with $1500$ nodes in its cluster and $270$ edges with nodes in other clusters. Total edges in this graph are roughly $0.5(1770 \times 30,000) = 26.55$ million. We also show the ability of the algorithm to recover correct clusters as we increase the number of edges across clusters. Figure \ref{Fig: s-block} shows the number of simulations where RARD recovered correct clusters in 50 different SBMs while varying $q$, the probability of placing an edge between two nodes in different clusters.

\begin{figure}[t!]
\centering
\includegraphics[width = 0.5\linewidth]{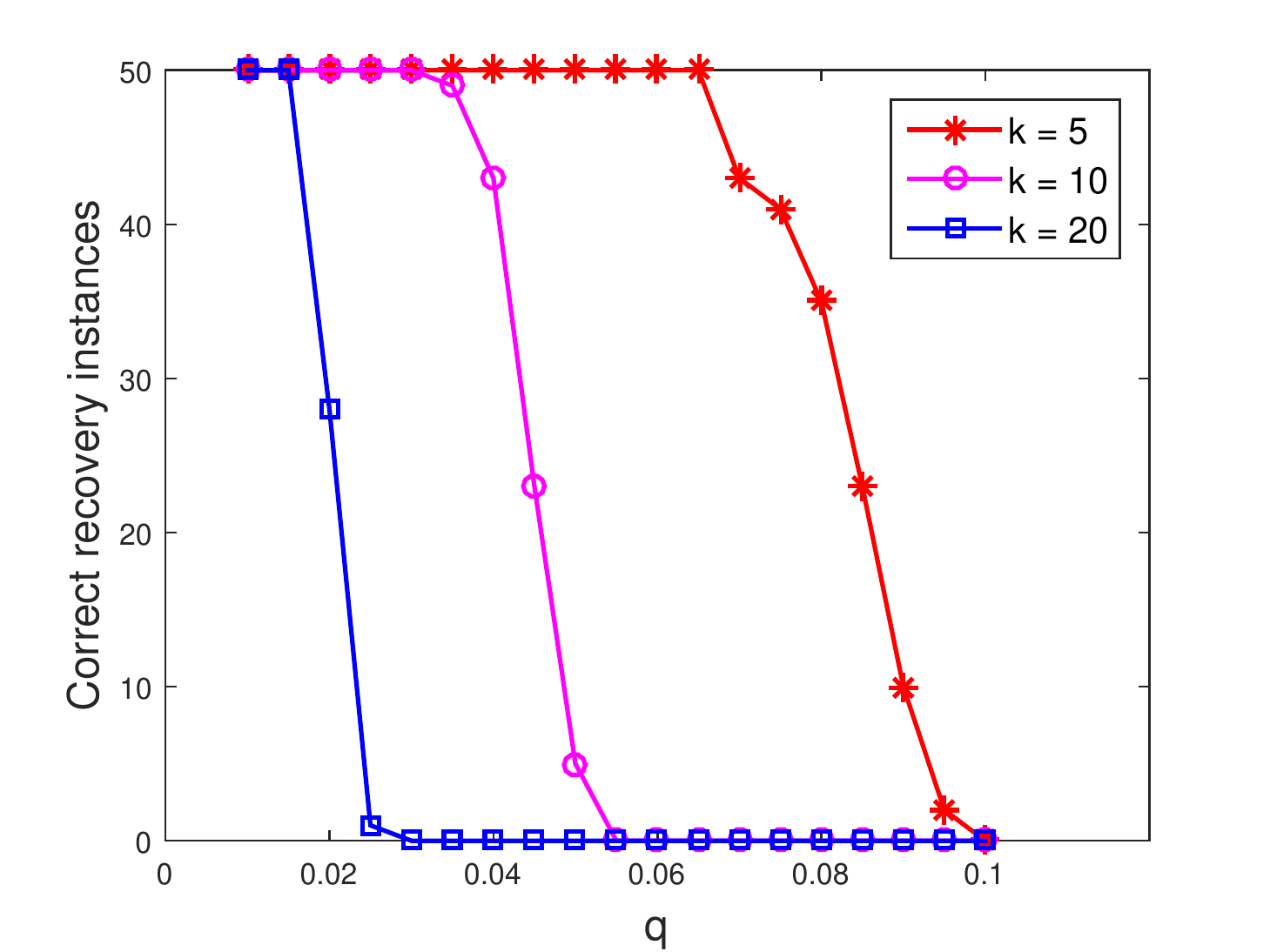}
\caption{Correct clusterings recovered in 50 runs vs. $q$ (probability of an edge between two points in different clusters). Parameters $n=15000$ and $p = 0.5$.}
\label{Fig: s-block}
\end{figure}

\begin{table*}[!t]
\caption{Comparison of Average NMI with standard error and runtime in seconds (in parenthesis) over 50 simulations on real datasets. $k$ denotes the number of clusters. For each $k$, 50 runs are conducted on randomly chosen clusters (except for $k  = 10$ for USPS and $k=20$ for COIL20). N-cut represents the normalized cut algorithm \cite{Shi2000}. FASC is KASP algorithm as defined in \cite{Yan2009}. FASC 1 and 2 are implemented with 10\%  and 5\% representative points respectively. Runtime of FASC is significantly more than our algorithm because it uses k-means to get the representative points which has linear time complexity in terms of dimension of the dataset. NYSTROM 1 and 2 are approximations of Normalized cut algorithm as defined in \cite{Fowlkes2004} with a random sample of 50\% and 20\% respectively. \vspace{-.4cm}}
\label{Tab: usps}
\footnotesize
\begin{center}
\begin{tabular}{c@{~~} c@{~~} c@{~~} c @{~~}c @{~~}c @{~~}c }
\multicolumn{7}{c}{(a) USPS} \\[0.1 cm]
\hline
{$k$}   &{\bf NCUT} & {\bf FASC 1} &  {\bf FASC 2} & {\bf NYSTROM 1} & {\bf NYSTROM 2} & {\bf RARD} 
\\ \hline 
4    &     $90.50\pm2.53(0.48)$     & $84.80\pm1.26(4.46)$ & $84.21\pm1.09 (1.23)$ &  $74.36\pm1.47 (1.57)$   &  $68.66\pm1.53 (0.32)$   &   $88.07\pm1.26(0.37) $         \\
6    &    $85.76\pm0.93(1.13)$   &  $80.88\pm0.87(6.28)$ &  $79.43\pm0.92(2.39)$ &  $73.06\pm1.02 (2.21)$ & $66.46\pm0.69 (0.77)$ &  $85.23\pm0.87(0.58)$     \\
8     &        $85.21\pm0.42(2.31)$   & $79.28\pm0.67(8.93) $ & $78.67\pm0.66(5.31)$ &  $69.93\pm0.63 (4.54)$ & $62.93\pm0.65 (1.88)$  &  $84.56\pm0.41(0.82)$     \\
10    &        $81.47\pm0.00(6.71)$   & $78.18\pm0.42(12.30) $ & $76.95\pm0.51(8.05)$ &  $67.20\pm0.27 (6.60)	$ & $61.94\pm0.30 (1.79)$  &   $82.33\pm0.22(1.14)$     \\
\hline 
\multicolumn{4}{c}{} \\
\multicolumn{7}{c}{(b) COIL20} \\[0.1cm]
\hline 
{$k$}   &{\bf NCUT} &  {\bf FASC 1} & {\bf FASC 2} &{\bf NYSTROM 1} &{\bf NYSTROM 2}  & {\bf RARD}
\\ \hline 
4      &        $   98.68 \pm 0.96 (0.15)      $   & $74.44\pm2.24(0.94)$     & $ 72.82\pm2.04(0.41)$ & $84.86\pm2.05 (0.02)$ & $77.31\pm2.51 (0.01)$ &    $97.32 \pm 1.79 (0.13)$      \\
8      &        $    97.15 \pm 0.90 (0.38)    $   & $73.72\pm1.16(1.30)$  &  $71.35\pm1.27(0.82)$ & $83.82\pm1.15 (0.05)$ &  $73.25\pm1.53 (0.04)$  &  $ 94.03 \pm 0.73 (1.17)$      \\
12    &        $    94.58 \pm 0.79 (0.72) $   & $73.35\pm0.69(2.77)$   & $71.01\pm0.69(1.84)$ & $80.29\pm0.99 (0.11)$ &$69.88\pm1.12 (0.07)$ & $94.36 \pm 0.92 (0.41)$      \\
16   & $    92.28 \pm 0.69 (1.23) $   & $74.46\pm0.61(4.14)$   &  $70.66\pm0.45(2.44)$   & $76.17\pm0.71 (0.20)$ & $67.05\pm0.62 (0.11)$  &  $92.35 \pm 0.28 (0.55)$      \\
20     &        $91.93\pm0.00(1.98)$   & $73.81\pm0.43(6.53)$ & $70.92\pm0.41(3.59)$ & $73.31\pm0.43 (0.32)$  &   $63.61\pm0.46 (0.16)$   & $92.90\pm0.09(0.68)$      \\
\hline 
\end{tabular}
\end{center}
\end{table*}
 

\subsection{Real datasets}
\label{Sec: Real}

We empirically compare the accuracy and speed of our RARD algorithm with normalized cut algorithm \cite{Shi2000}, fast approximate spectral clustering \cite{Yan2009} and Nystrom method \cite{Fowlkes2004} on two real datasets. The USPS dataset has 7291 instances and length of the feature vector is 256 \cite{Le1990}. It has a total of 10 clusters. The COIL20 dataset consists of 1140 examples and has 1024 features with 20 clusters \cite{Nene1996}. We use a p-nearest neighbor graph to construct the similarity matrix. For both dataset we use $p = 4$. $\epsilon_0$ is set to $10^{-3}$ and $10^{-2}$ for USPS and COIL20 datasets respectively. We compare normalized mutual information and computation time of RARD algorithm with other algorithms. As demonstrated earlier RARD algorithm has the same accuracy as normalized cut algorithm. Our algorithm does not sacrifice accuracy as opposed to FASC and Nystrom method which also claim to improve the speed of spectral clustering. For large datasets RARD is faster than both FASC and Nystrom and does not compromise on the accuracy.

\begin{figure}[!t]
     \centering
     \subfloat[][Ncut with 30 segments]{\includegraphics[width=.35\linewidth]{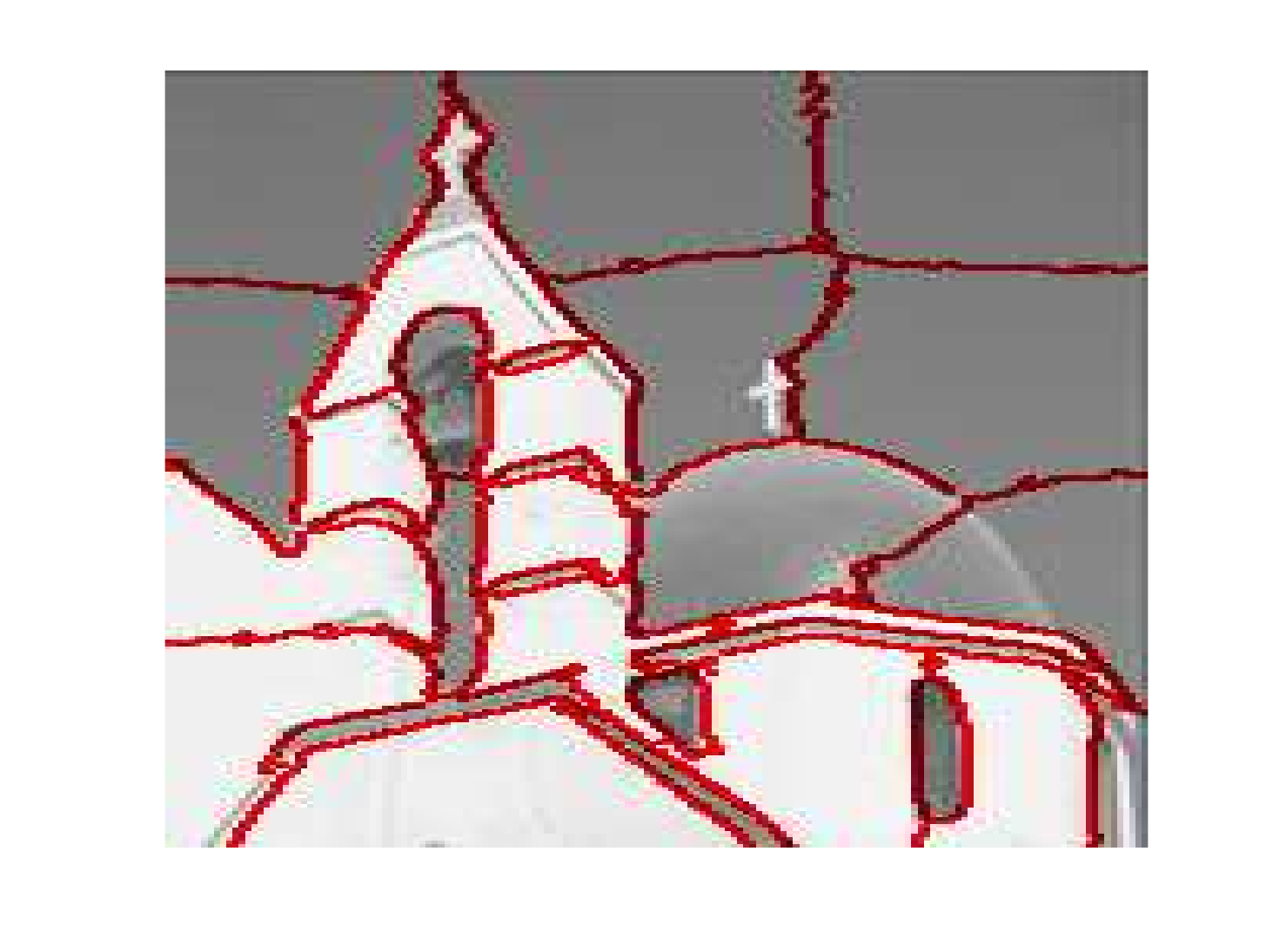}\label{Fig: 7n-30}}
     \subfloat[][RARD with 30 segments]{\includegraphics[width=.35\linewidth]{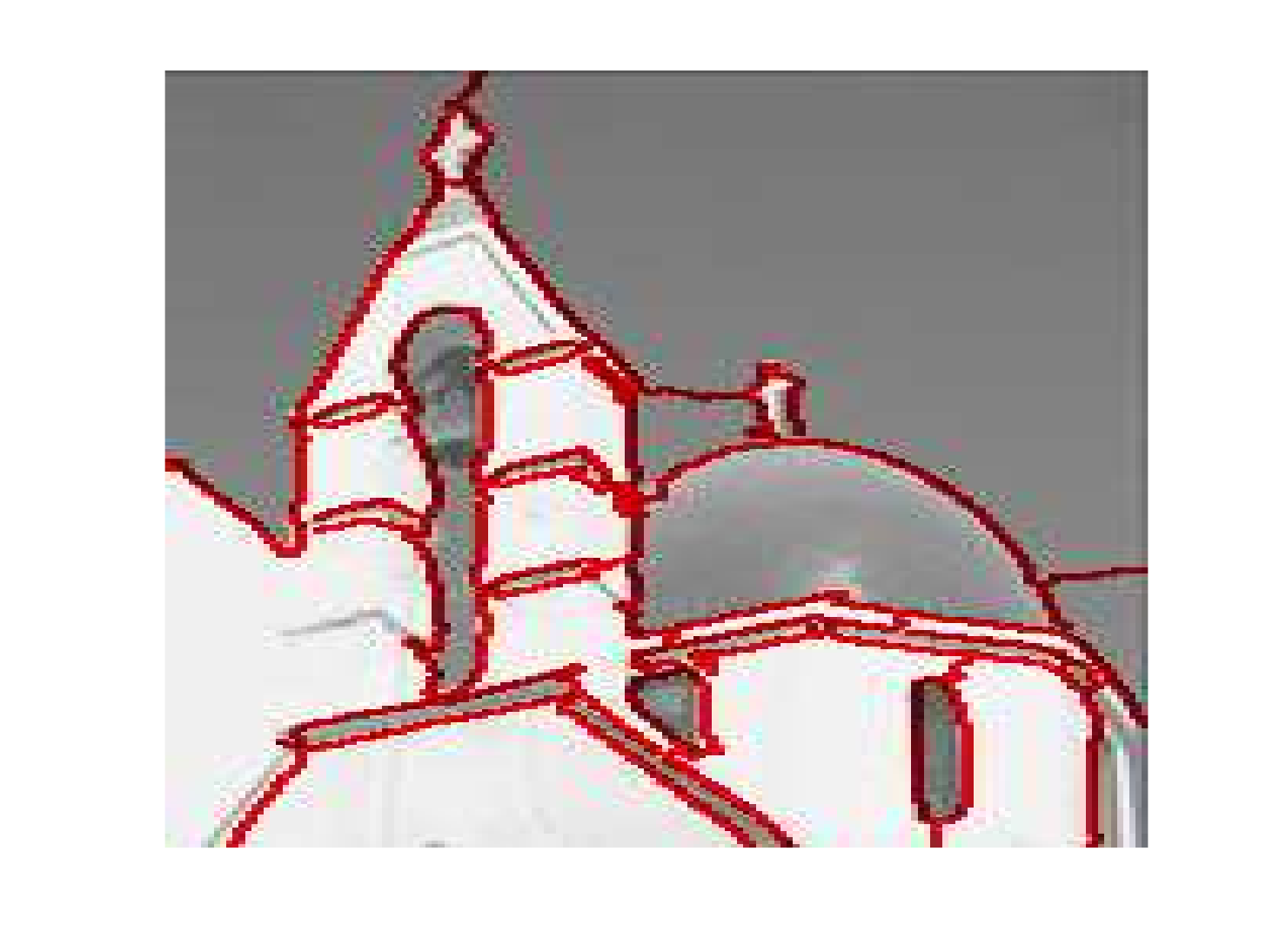}\label{Fig: 7r-33}}
     
     \subfloat[][Ncut with 30 segments]{\includegraphics[width=.35\linewidth]{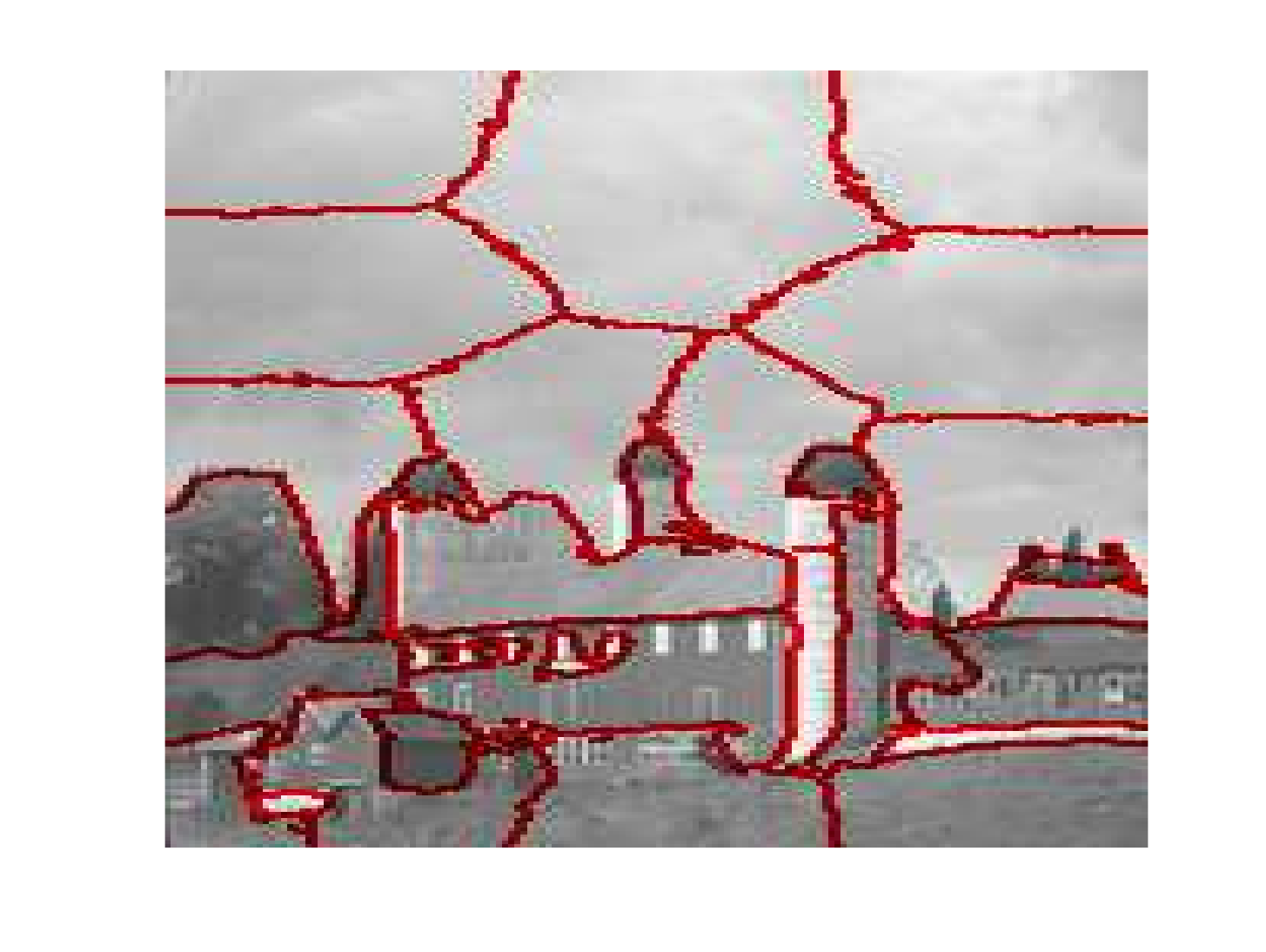}\label{Fig: 11n-30}}
     \subfloat[][RARD with 30 segments]{\includegraphics[width=.35\linewidth]{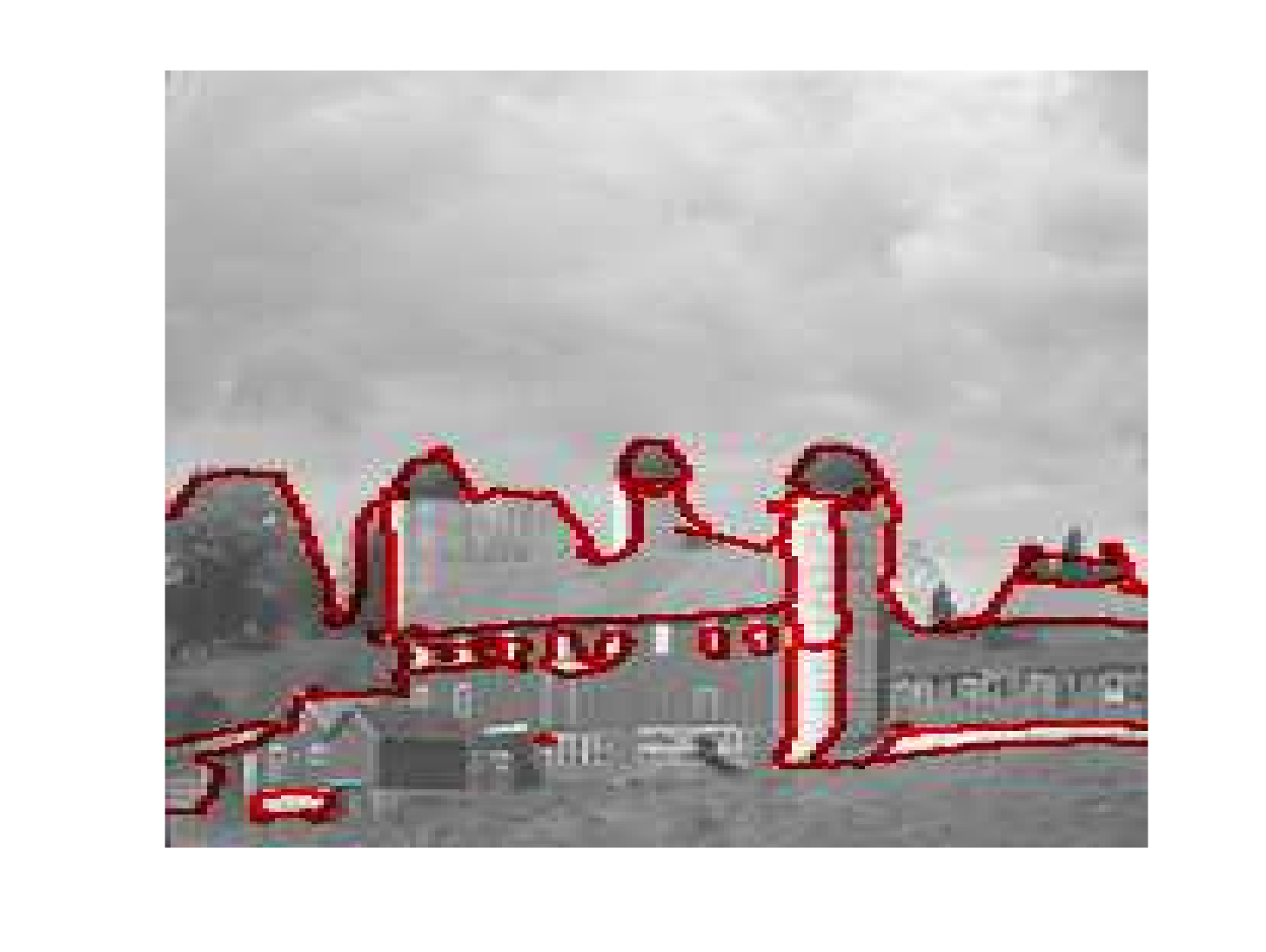}\label{Fig: 11r-27}}               
     \par
     \caption{Comparison of Ncut and RARD on image segmentation.}
     \label{Fig: imgseg}
\end{figure}

\subsection{Image segmentation}
We also test our RARD algorithm against normalized cut algorithm \cite{Shi2000} on image segmentation on some standard images. We show that our algorithm performs significantly better than normalized cut algorithm when the number of segments is large. We follow the feature selection of Shi and Malik \cite{Shi2000}.	In particular, normalized cut fails to detect big segments and thus divides them into smaller sub-segments. On the other hand RARD can detect segments of varying sizes correctly. The resulting segmentations are shown in Figure \ref{Fig: imgseg}. 

\begin{figure}[!t]
     \centering
     \subfloat[][Mixture of Gaussians]{\includegraphics[width=.25\linewidth]{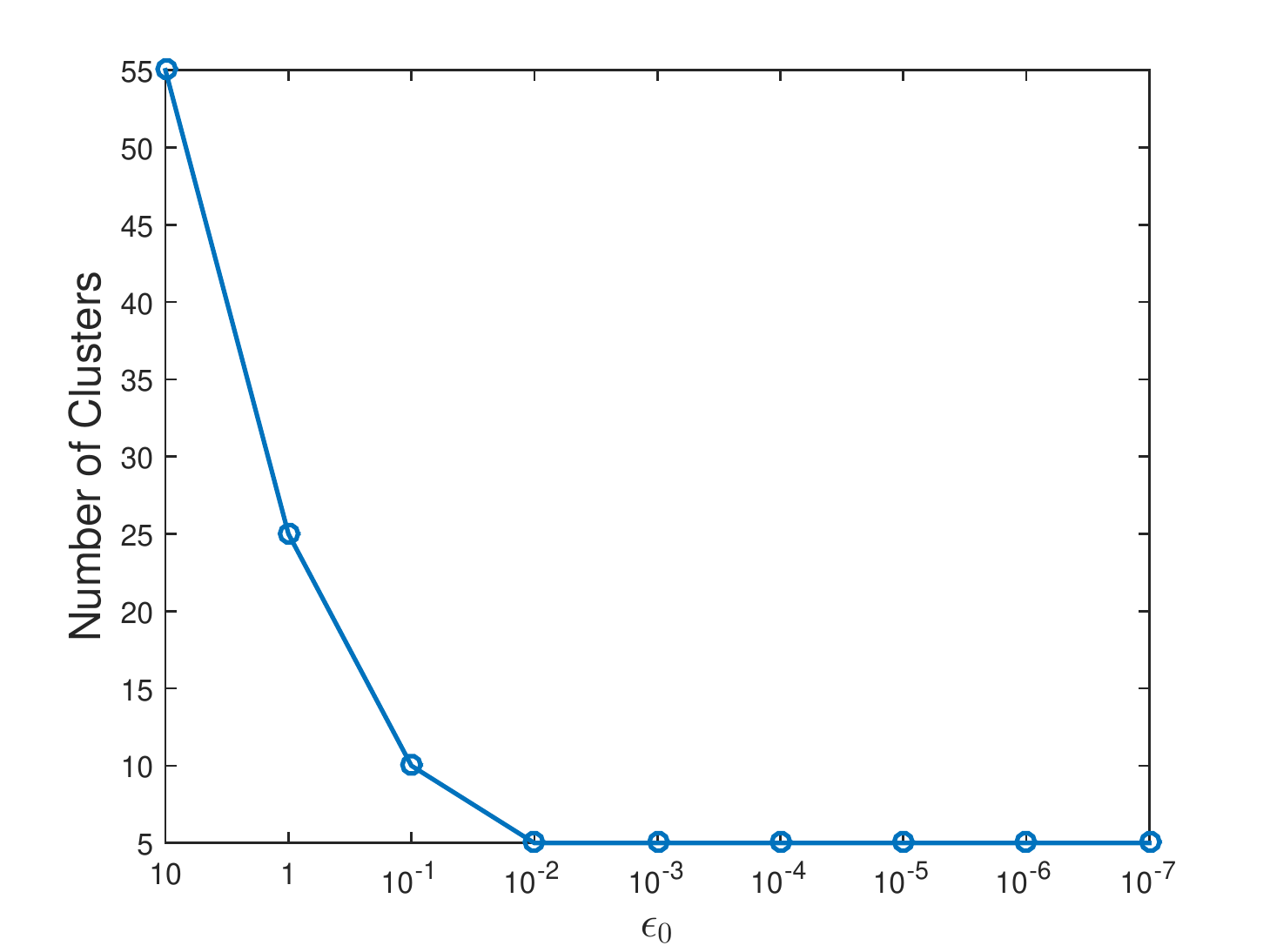}\label{Fig: par-2-1}}
     \subfloat[][Clustering aggregation]{\includegraphics[width=.25\linewidth]{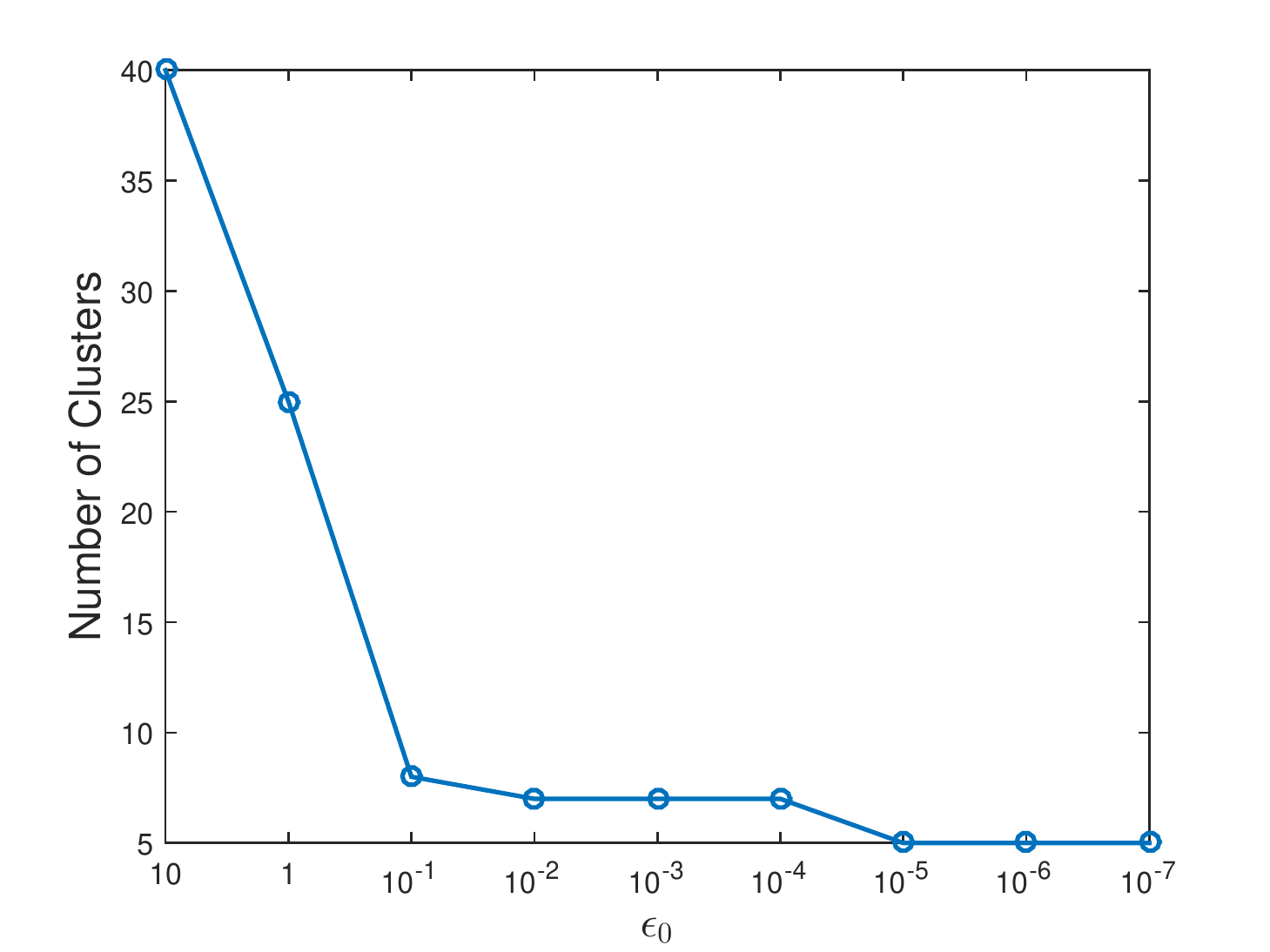}\label{Fig: par-2-2}}
     \subfloat[][Two crescents]{\includegraphics[width=.25\linewidth]{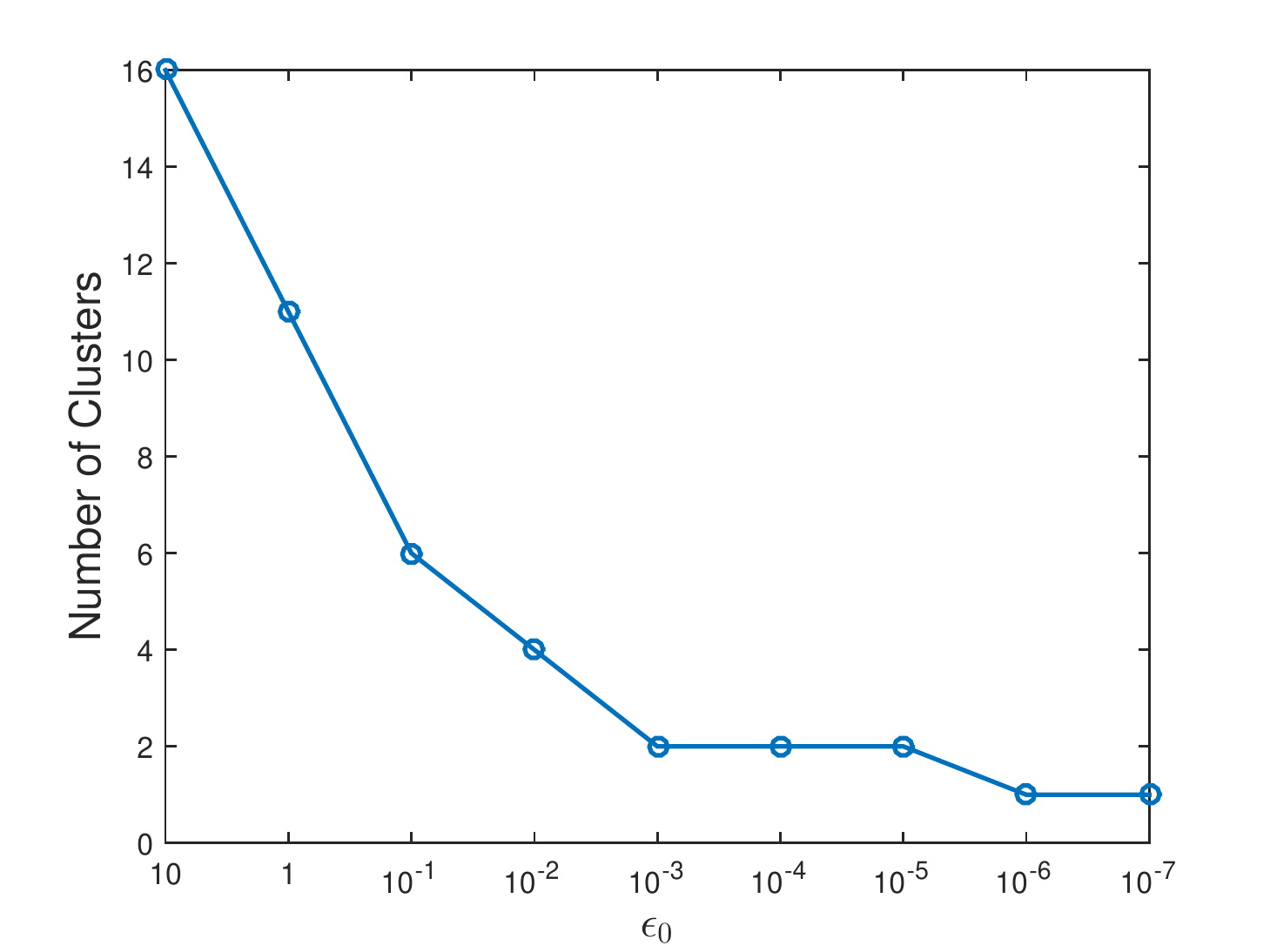}\label{Fig: par-2-3}}
     \subfloat[][Half ellipses]{\includegraphics[width=.25\linewidth]{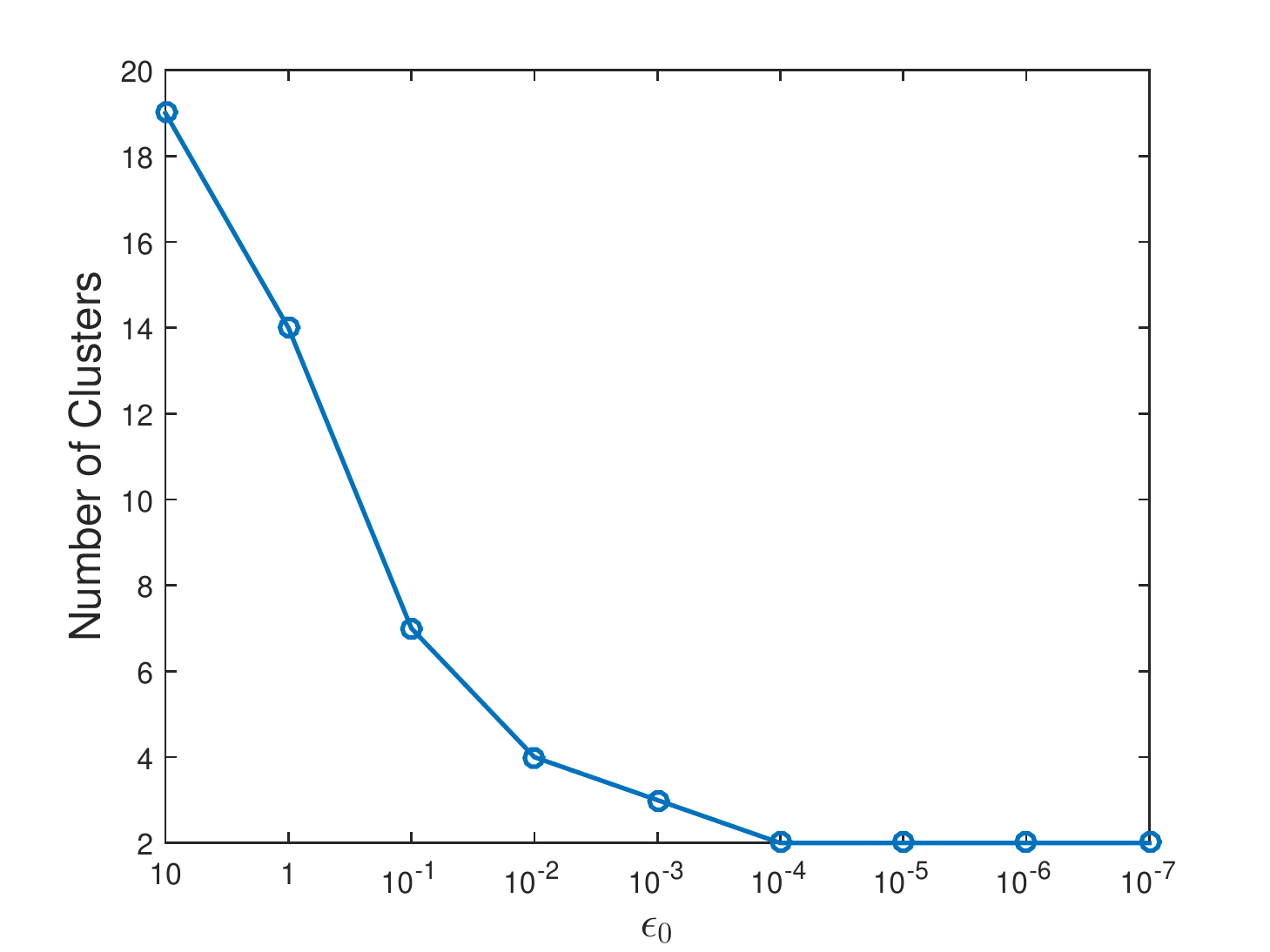}\label{Fig: par-2-4}}               
     
     \caption{$\epsilon_0$ vs Ncut and number of clusters in Algorithm \ref{Alg: agents-r}}
     \label{Fig: par-2}
\end{figure}

\subsection{Parameter selection}

Our algorithm proposed in this paper require only one parameter $\epsilon_0$. We start with some initial value of $\epsilon_0$ and monitor the number of clusters produced by RARD algorithm. We then decrease $\epsilon_0$ by a constant factor and observe the number clusters detected by RARD algorithm. In this fashion, we search for two consecutive $\epsilon_0$ with the same number of clusters, i.e., we set $\epsilon_0$, when the number of clusters become stable. Figure \ref{Fig: par-2} exhibits this procedure for synthetic datasets. If we already know the number of clusters $k$ in the dataset, then the value $\epsilon_0$ is chosen corresponding to the value of $k$. Gaussian similarity measure require a parameter $\sigma$ which is difficult to tune.  Automatically selecting $\sigma$ is a challenging problem and has received interest in some recent studies. More detailed analysis of this subject is beyond the scope this work. Interested readers can refer to \cite{Zelnik2004}.

\section{Connection to normalized cuts}
Shi and Malik proposed a graph partitioning criteria known as normalized cut (NCut) \cite{Shi2000}. A $k$-way NCut is defined as
\begin{equation}
\text{NCut}(\mathcal{V}_1, \mathcal{V}_2, \ldots, \mathcal{V}_k) = \sum_{i = 1}^k \frac{cut(\mathcal{V}_i, \mathcal{V} - \mathcal{V}_i)}{a(\mathcal{V}_i)},  \label{Eq: Ncut}
\end{equation}
where $cut(\mathcal{A}, \mathcal{B}) = \sum_{i \in \mathcal{A}, j \in \mathcal{B}} w_{ij}$ and $a(\mathcal{A}) = \sum_{i \in \mathcal{A}, j \in \mathcal{V}} w_{ij}.$ They showed that minimizing the 2-way normalized cut to obtain a bipartition of the graph is equal to minimizing the following Rayleigh quotient.
\[
\min_\mathbf{y} \frac{\mathbf{y}^T (D - W) \mathbf{y}}{\mathbf{y}^T D \mathbf{y}}
\]
where D is the degree matrix. The vector $\mathbf{y}$ of length $n$ satisfies $\mathbf{y}^TD\mathbf{1} = 0$ and $y_i \in \{1, -b\}$ with $b$ some constant in $(0,1).$ This problem is NP-Hard \cite{aggarwal2013} and is approximated by relaxing $\mathbf{y}$ to take on real values. This approximation leads to solving the generalized eigenvalue problem $(D - W)\mathbf{y} = \lambda D\mathbf{y}$ for the second smallest eigenvalue also known as the Fiedler value. Since $D$ is invertible the generalized eigenvalue problem is equivalent to solving
\begin{align}        
(I - D^{-1}W)\mathbf{y} = \lambda \mathbf{y}.	
\end{align}
Thus, minimizing a 2-way Ncut is approximated by finding the second smallest eigenvector of $I - D^{-1}W$. In this paper, we  are looking for a linear combination of the eigenvectors of $D^{-1}W$. Observe that eigenvectors of both these systems are same, however the eigenvalues are translated by 1.

\section{Conclusions}
We have proposed a fast spectral clustering algorithm based on a mixing process, which does not explicitly compute the eigenvectors of a similarity matrix. It rather finds an eigenvalue weighted linear combination of eigenvectors of the normalized similarity matrix. Our algorithms are simple to implement and computationally efficient. We have demonstrated the scalability and accuracy of the RARD algorithm by implementing it on large stochastic block models with tens of thousands of nodes and hundreds of millions of edges. We have also shown that our RARD algorithm has the same  accuracy as normalized cut algorithm. Thus our algorithm does not compromise on accuracy to achieve faster speed unlike other algorithms which claim to speed-up spectral clustering such as fast approximate spectral clustering \cite{Yan2009} and Nystrom method \cite{Fowlkes2004}.
\subsubsection*{Acknowledgments}
 This research is supported in part by NSF grant CNS 13-30077 and DMS 1312907.
\begingroup
\renewcommand{\section}[2]{}%
\subsubsection*{References}
\bibliographystyle{unsrt}
\bibliography{\myreferences}

\begin{thebibliography}{10}

\bibitem{Shi2000}
Jianbo Shi and Jitendra Malik.
\newblock Normalized cuts and image segmentation.
\newblock {\em IEEE Transactions on Pattern Analysis and Machine Intelligence},
  22(8):888--905, 2000.

\bibitem{Ding2001}
Chris~HQ Ding, Xiaofeng He, Hongyuan Zha, Ming Gu, and Horst~D Simon.
\newblock A min-max cut algorithm for graph partitioning and data clustering.
\newblock In {\em Proceedings of the IEEE International Conference on Data
  Mining}, pages 107--114, 2001.

\bibitem{Ng2002}
Andrew~Y Ng, Michael~I Jordan, and Yair Weiss.
\newblock On spectral clustering: Analysis and an algorithm.
\newblock In {\em Advances in Neural Information Processing Systems 14}, pages
  849--856. 2002.

\bibitem{Bach2004}
Francis~R. Bach and Michael~I. Jordan.
\newblock Learning spectral clustering.
\newblock In {\em Advances in Neural Information Processing Systems 16}, pages
  305--312. 2004.

\bibitem{Arbenz2012}
Peter Arbenz and Daniel Kressner.
\newblock Lecture notes on solving large scale eigenvalue problems.
\newblock {\em D-MATH, EHT Zurich}, 2012.

\bibitem{Fowlkes2004}
Charless Fowlkes, Serge Belongie, Fan Chung, and Jitendra Malik.
\newblock Spectral grouping using the {N}ystrom method.
\newblock {\em IEEE Transactions on Pattern Analysis and Machine Intelligence},
  26(2):214--225, 2004.

\bibitem{Sakai2009}
Tomoya Sakai and Atsushi Imiya.
\newblock Fast spectral clustering with random projection and sampling.
\newblock In {\em Machine Learning and Data Mining in Pattern Recognition},
  volume 5632 of {\em Lecture Notes in Computer Science}, pages 372--384. 2009.

\bibitem{Yan2009}
Donghui Yan, Ling Huang, and Michael~I Jordan.
\newblock Fast approximate spectral clustering.
\newblock In {\em Proceedings of the 15th ACM SIGKDD International Conference
  on Knowledge Discovery and Data Mining}, pages 907--916, 2009.

\bibitem{Wen-yen2011}
Wen-Yen Chen, Yangqiu Song, Hongjie Bai, Chih-Jen Lin, and Edward~Y Chang.
\newblock Parallel spectral clustering in distributed systems.
\newblock {\em IEEE Transactions on Pattern Analysis and Machine Intelligence},
  33(3):568--586, 2011.

\bibitem{Chung1997}
Fan~RK Chung.
\newblock {\em Spectral graph theory}, volume~92.
\newblock American Mathematical Soc., 1997.

\bibitem{Simon1961}
Herbert~A Simon and Albert Ando.
\newblock Aggregation of variables in dynamic systems.
\newblock {\em Econometrica: Journal of the Econometric Society}, pages
  111--138, 1961.

\bibitem{Le1990}
Yann Le~Cun, Bernhard~E Boser, John~S Denker, Donnie Henderson, Richard~E
  Howard, Wayne~E. Hubbard, and Lawrence~D Jackel.
\newblock Handwritten digit recognition with a back-propagation network.
\newblock In {\em Advances in Neural Information Processing Systems 2}, pages
  396--404. 1990.

\bibitem{Nene1996}
Sameer~A Nene, Shree~K Nayar, and Hiroshi Murase.
\newblock Columbia object image library (coil-20).
\newblock Technical report, CUCS-005-96, 1996.

\bibitem{Zelnik2004}
Lihi Zelnik-Manor and Pietro Perona.
\newblock Self-tuning spectral clustering.
\newblock In {\em Advances in Neural Information Processing Systems 17}, pages
  1601--1608. 2005.

\bibitem{aggarwal2013}
Charu~C Aggarwal and Chandan~K Reddy.
\newblock {\em Data clustering: algorithms and applications}.
\newblock CRC Press, 2013.

\bibitem{Gionis2005}
Aristides Gionis, Heikki Mannila, and Panayiotis Tsaparas.
\newblock Clustering aggregation.
\newblock {\em ACM Transactions on Knowledge Discovery from Data}, 1(1):4,
  2007.

\end{thebibliography}
\endgroup

\newpage

\appendix

\section{Details of Datasets}
\label{Ap: Exam}
\subsection{Synthetic Datasets}
\begin{itemize}
\item \emph{Mixture of Gaussians:} We consider a mixture of five Gaussians random variables $X_1, X_2,X_3,X_4$ and $X_5$ with different densities. The five mean vectors and covariance matrices are
\begin{align*}
&\mathbf{\mu}_1 =  \left[ \begin{array}{r}
-5 \\
-5
\end{array}\right],~~\mathbf{\mu}_2 =  \left[ \begin{array}{r}
0 \\
0
\end{array}\right],~~
\mathbf{\mu}_3 =  \left[ \begin{array}{r}
6 \\
-6
\end{array}\right],~~\mathbf{\mu}_4 =  \left[ \begin{array}{r}
-6 \\
6
\end{array}\right],~~\mathbf{\mu}_5 =  \left[ \begin{array}{r}
5 \\
5
\end{array}\right] \\
&\Sigma_1 =  \left[ \begin{array}{r@{\quad} r}
.5 & 0 \\
0 & .5
\end{array}\right],~~\Sigma_2 =  \left[ \begin{array}{r@{\quad} r}
3.5 & 0 \\
0 & 3.5
\end{array}\right],~~
\Sigma_3 =  \left[  \begin{array}{r@{\quad} r}
2 & 0 \\
0 & 2
\end{array}\right],~~ \\
&\Sigma_4 =  \left[  \begin{array}{r@{\quad} r}
1 & 0 \\
0 & 1
\end{array}\right],~~\Sigma_5 =  \left[  \begin{array}{r@{\quad} r}
1 & -.5\\
-.5 & 1.5
\end{array}\right] 
\end{align*}
A sample of 2000 points is taken from these distributions with $100$,  1000, 300, 200 and 400 samples from $X_1, X_2,X_3,X_4$ and $X_5$ respectively. 
We have used $\sigma = 0.5$ for RBF similarity.
\item \emph{Clustering aggregation:} This dataset set is taken from Clustering aggregation paper \cite{Gionis2005}. The authors show that single link, complete link, average link, ward's method and k-means fail to recover the correct clusters in this data set. It has 7 clusters and 788 total points. Parameter $\sigma$ is kept at 1 for RBF similarity function.
\item \emph{Two crescents:} This dataset has two clusters with a total of 384 points. The value of parameter $\sigma$ in the RBF similarity function is kept equal to 1.5. 
\item \emph{Half ellipses:} This dataset contains a total of 2000 points with 1000 points in each cluster. Parameter $\sigma = 2.5$ is used for RBF similarity.
\end{itemize}
\subsection{Real Datasets}
\begin{itemize}
\item \emph{USPS} \cite{Le1990}: This dataset contains $7291$ grayscale images of digits $0-9$ scanned from envelopes by United States Postal Service. Each feature vector consists of normalized grayscale values of $16\times16=256$ pixels.
\item \emph{COIL20} \cite{Nene1996}: This dataset consists of $32 \times 32$ grayscale images of 20 different objects. 72 different images of each object are taken from 72 different angles as the objects are rotated on a table, i.e., after every 5 degree rotation an image is taken. A feature vector consists of 1024 normalized grayscale values i.e., one value for each pixel. 
\end{itemize}
\begin{figure}[t!]
\centering
\includegraphics[scale=0.75]{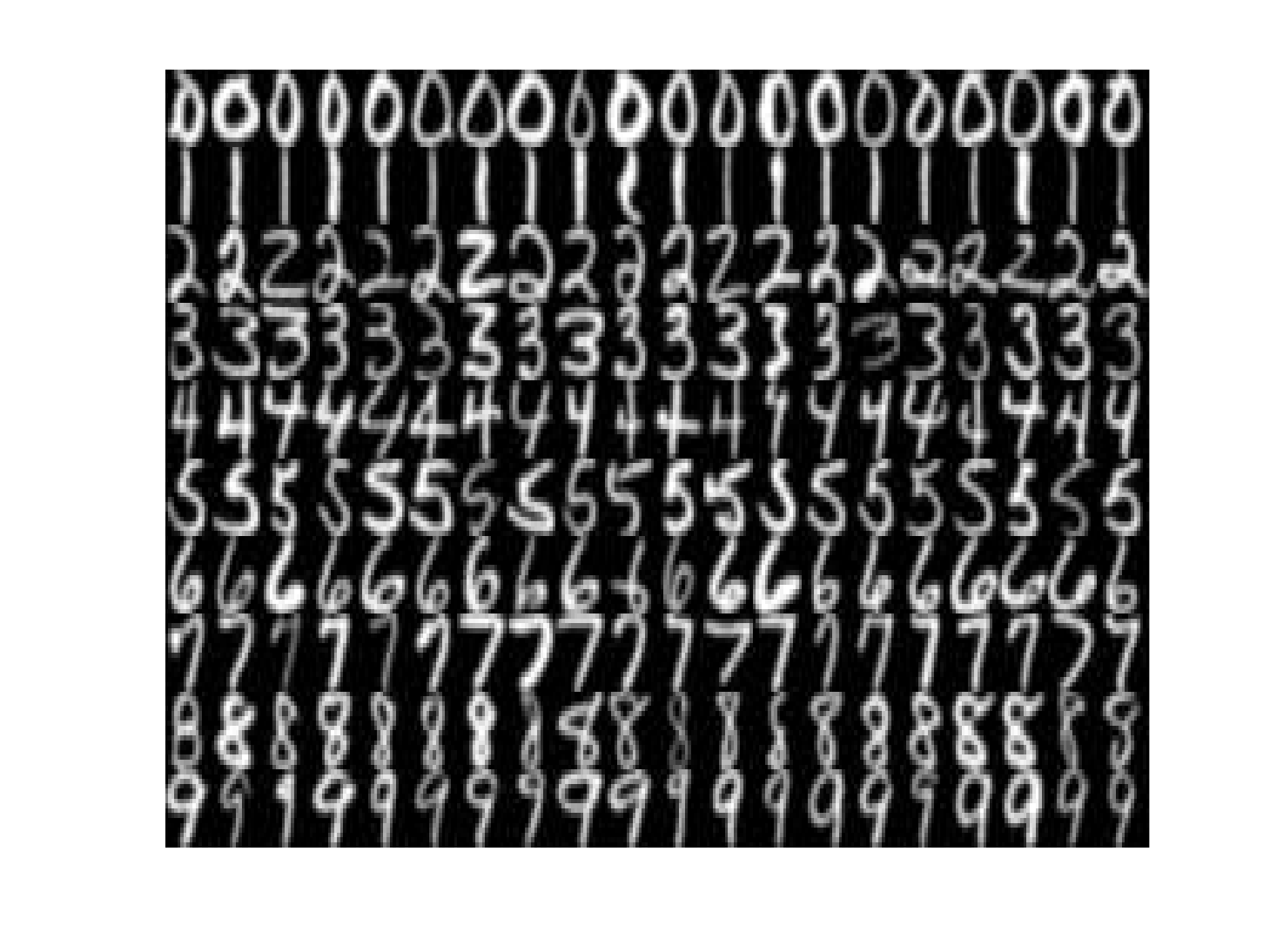}
\caption{A sample of USPS dataset}
\label{Fig: usps_samp}
\end{figure}
\begin{figure}[t!]
\centering
\includegraphics[scale=0.75]{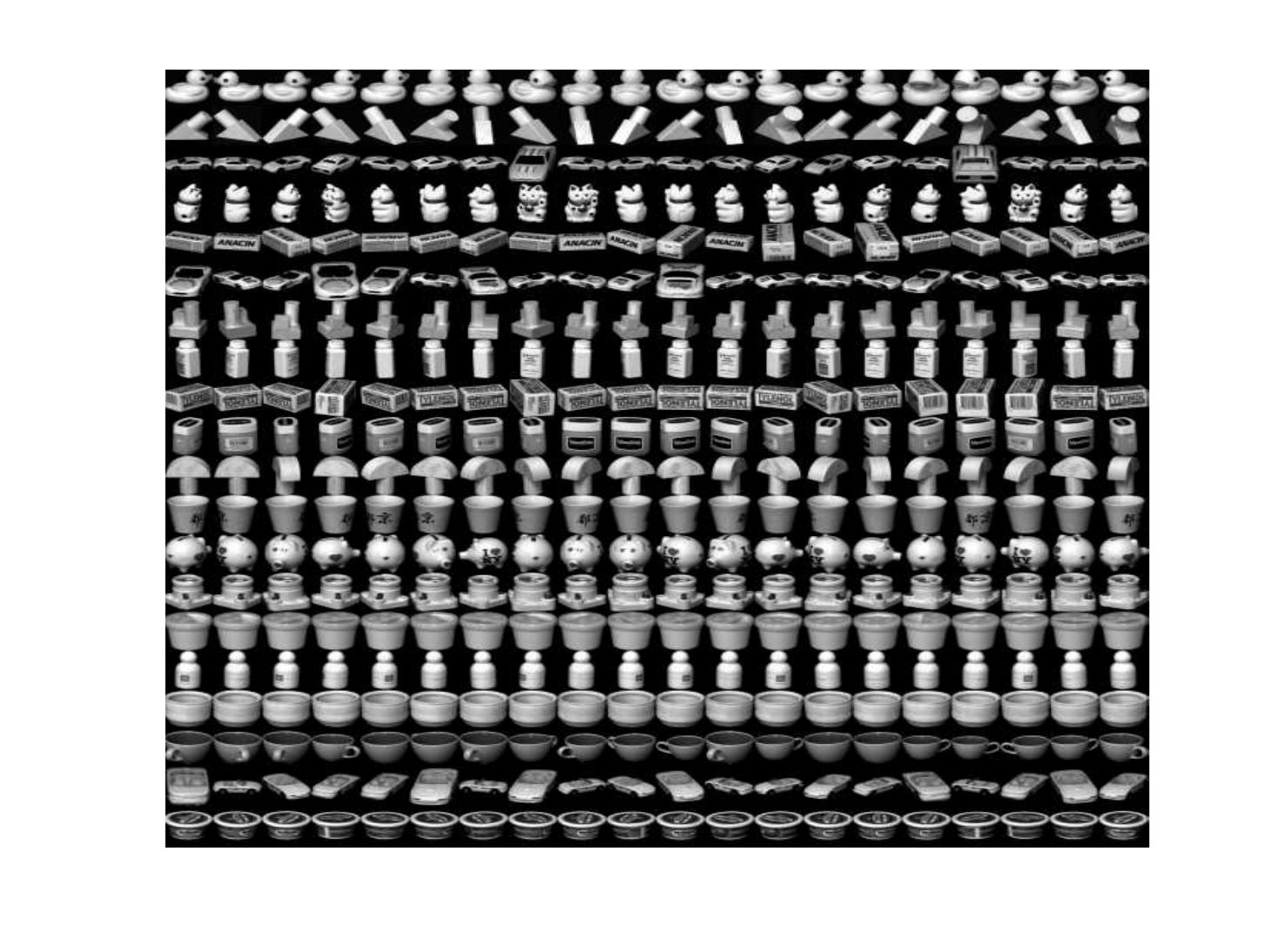}
\caption{A sample of COIL20 dataset}
\label{Fig: coil_samp}
\end{figure}


\end{document}